\algrenewcommand\algorithmicrequire{\textbf{Input:}}
\algrenewcommand\algorithmicensure{\textbf{Output:}}
\definecolor{Gray}{gray}{0.85}
\Crefname{equation}{Eq.}{Eqs.}
\newtheorem{theorem}{Theorem}
\newtheorem{lemma}[theorem]{Lemma}
\newtheorem{fact}[theorem]{Fact}
\newtheorem{result}{Result}
\newtheorem{corollary}[theorem]{Corollary}
\DeclareMathOperator*{\argmax}{arg\,max}
\def\be{\begin{eqnarray}}
\def\ee{\end{eqnarray}}
\newcounter{phase}[algorithm]
\newlength{\phaserulewidth}
\newcommand{\setphaserulewidth}{\setlength{\phaserulewidth}}
\newcommand{\phase}[1]{%
  \vspace{-0.5ex}
  \Statex\leavevmode\llap{\rule{\dimexpr\labelwidth+\labelsep}{\phaserulewidth}}\rule{\linewidth}{\phaserulewidth}
  \Statex\strut\refstepcounter{phase}\textbf{#1}
  \vspace{-1.5ex}\Statex\leavevmode\llap{\rule{\dimexpr\labelwidth+\labelsep}{\phaserulewidth}}\rule{\linewidth}{\phaserulewidth}}
\newcommand{\poly}{\operatorname{poly}}
\algnewcommand\algorithmicforeach{\textbf{for all:}}
\algnewcommand\ForEach{\item[ \algorithmicforeach]}
\begin{document}

\title[A Bit of Freedom Goes a Long Way: Classical and Quantum Algorithms for Reinforcement Learning under a Generative Model]{A Bit of Freedom Goes a Long Way: Classical and Quantum Algorithms for Reinforcement Learning under a Generative Model}

\author[1]{\fnm{Andris} \sur{Ambainis}}\email{andris.ambainis@lu.lv}

\author[2]{\fnm{Joao F.} \sur{Doriguello}}\email{doriguello@renyi.hu}

\author[1]{\fnm{Debbie} \sur{Lim}}\email{limhueychih@gmail.com}

\affil[1]{\orgdiv{Center for Quantum Computer Science, Faculty of Computing}, \orgname{University of Latvia}, \state{Riga}, \country{Latvia}}

\affil[2]{\orgname{HUN-REN Alfr\'ed R\'enyi Institute of Mathematics}, \state{Budapest}, \country{Hungary}}



\abstract{We propose novel classical and quantum online algorithms for learning finite- and infinite-horizon Markov Decision Processes (MDPs). Our algorithms are based on a hybrid online-offline reinforcement learning model wherein the agent can, from time to time, freely interact with the environment in a generative sampling fashion, i.e., by having access to a ``simulator''. By employing known classical and new quantum algorithms for approximating optimal policies under a generative model within our learning algorithms, we show that it is possible to avoid several paradigms from RL like ``optimism in the face of uncertainty'' and ``posterior sampling'' and instead compute and use optimal policies directly, which yields better regret bounds compared to previous works. Our quantum algorithms obtain regret bounds which only a $\operatorname{poly}\log{T}$ dependence on the number of time steps $T$, thus breaking the $O(\sqrt{T})$ classical barrier. Our infinite-horizon discounted regret bound is brand new, while in the finite- and infinite-horizon undiscounted settings, our results match the time dependence of some prior quantum works, but with improved dependence on other parameters like state space size $S$ and action space size $A$.}

\keywords{quantum machine learning, markov decision process, reinforcement learning, quantum algorithms}



\maketitle

\section{Introduction}

Reinforcement learning (RL)~\cite{sutton1998reinforcement} is a subfield of machine learning that studies how an agent can properly interact with a dynamical environment in order to maximise some type of reward. Markov Decision Processes (MDPs)~\cite{puterman2014markov} serve as the most commonly used framework for modeling such agent-environment interactions. Relevant to our work are time-independent MDPs described by a tuple $\langle \mathcal{S},\mathcal{A},p,r\rangle$, where the finite state space $\mathcal{S}$ of size $S$ is the set of possible states the environment can assume, the finite action space $\mathcal{A}$ of size $A$ is the set of possible actions the agent can choose, the reward function $r:\mathcal{S}\times\mathcal{A}\to[0,1]$ yields a reward to the agent when interacting with the environment, and the stochastic kernels $p(\cdot|s,a)$ denote the transition probability to another environment state given the current state-action pair $(s,a)$. At any (discrete) point in time, the environment is in some state $s_t\in\mathcal{S}$ and the agent must choose an action $a_t\in\mathcal{A}$, after which they receive a reward $r(s_t,a_t)$ and the environment randomly transitions to a new state $s_{t+1}$ according to $p(\cdot|s_t,a_t)$. The agent chooses an action through a \emph{policy}  $\pi = (\pi_t)_t$, which is a sequence of \emph{decision rules} $\pi_t$, which in turn are probability distribution over actions $a\in\mathcal{A}$ given $s\in\mathcal{S}$. 
The practical utility of MDPs lies in their ability to model decision-making in complex environments~\cite{aastrom1965optimal,hu2007markov,sato2010markov,bauerle2011markov,feinberg2012handbook,bennett2013artificial,chen2014distributed,steimle2017markov,natarajan2022planning}, with successful applications to many problems~\cite{sutton1998reinforcement,Szepesvari2010algorithms,bertsekas2012dynamic,bertsekas2022abstract}.

Parallel to RL developments, quantum machine learning~\cite{Biamonte2017quantum}, a subfield of quantum computation~\cite{nielsen2010quantum}, has been consolidated as an area of study that can provide quantum speedups to several traditional machine learning problems~\cite{lloyd2013quantum,rebentrost2014quantum,kerenidis2017quantum,kerenidis2019qmeans,doriguello2023you}, including RL~\cite{dong2008quantum}. It has been found that quantum algorithms are not only capable of achieving speedup in the time complexity of certain tasks, but also have the potential to be better ``learners'' than their classical counterparts in the online setting~\cite{ganguly2023quantum,zhong2023provably}.

In this work we study two fundamental problems related to MDPs: (i) approximating optimal policies under a generative model and (ii) online learning, both in classical and quantum settings. For that, we propose a hybrid online-offline RL model in which quantum computers are capable of offering an improved learning progress compared to previous works~\citep{auer2008near,bartlett2009regal,ortner2012online,lakshmanan2015improved,fruit2018efficient,ganguly2023quantum,zhong2023provably}.

\subsection{Preliminaries}

\textbf{Notations:} For $n\in\mathbb{N} := \{1,2,\dots\}$, let $[n]:=\{1, \dots, n\}$. Given a finite set $\mathcal{S}$, let $\mathscr{B}(\mathcal{S})$ be the space of all bounded Borel measurable real-valued functions on $\mathcal{S}$, which can be interpreted as $\mathbb{R}^S$ for $S = |\mathcal{S}|$, and let $\Delta(\mathcal{S})$ denote the probability simplex over $\mathcal{S}$. Given ${u}\in\mathscr{B}(\mathcal{S})$, its $\ell_1$-, $\ell_\infty$-norm, and span seminorm are $\Vert {u}\Vert_1 := \sum_{i\in\mathcal{S}} \vert u(i)\vert$, $\Vert {u}\Vert_\infty := \max_{i\in\mathcal{S}} \vert u(i)\vert$, and $\operatorname{sp}(u) := \max_{i\in\mathcal{S}}u(i) - \min_{i\in\mathcal{S}}u(i)$. We use $\mathbf{1}\in\mathscr{B}(\mathcal{S})$ to denote the all-ones function, $\mathbf{1}(s) = 1$. Unless mentioned otherwise, $\widetilde{O}(\cdot)$ hides polylogarithmic factors.

\textbf{Quantum Preliminaries:} Little background on quantum computation is needed for our paper and we refer the reader to~\cite{nielsen2010quantum} for more information. The quantum state of a quantum system is described by a unit vector from a Hilbert space denoted by the ket notation $|\cdot\rangle$. An $n$-qubit system is described by a unit vector in $\mathbb{C}^{2^n}$. The evolution of a quantum state $|\psi\rangle\in\mathbb{C}^{2^n}$ is described by a unitary operator $U\in\mathbb{C}^{2^n\times 2^n}$, $UU^\dagger = I$ where $U^\dagger$ is the Hermitian conjugate of $U$. 
We use $\bar 0$ to denote the all-zeros vector and $\ket{\bar 0}$ to denote the state $\ket{0}\otimes\cdots\otimes\ket{0}$ where the number of qubits is clear from context.

In this paper, we shall employ quantum oracles for functions and probability distributions. We say we have quantum access to $u\in\mathscr{B}(\mathcal{S})$ if we have access to the oracle $\mathcal{O}_u:|s\rangle|\bar{0}\rangle \mapsto |s\rangle|u(s)\rangle$ and its inverse, and we say we have quantum sampling access to $p\in\Delta(\mathcal{S})$ if we have access to the oracle $\mathcal{O}_p:|\bar{0}\rangle \mapsto \sum_{s\in\mathcal{S}} \sqrt{p(s)}|s\rangle$ and its inverse. Quantum access to a function is usually referred to as a quantum random access memory (QRAM)~\cite{giovannetti2008architectures,giovannetti2008quantum,jaques2023qram,allcock2023constant}. It is possible to build quantum access to  $u\in\mathscr{B}(\mathcal{S})$ in $O(S)$ time.

\textbf{MDP Preliminaries:} A policy is \emph{deterministic} if it is a sequence of stochastic kernels $\pi=(\pi_t)_{t}$ on $\mathcal A$ given $\mathcal S$ such that $\pi_t(a\vert s) = 1$ for some $a\in\mathcal{A}$. A policy is said to be \emph{stationary} if it is a constant sequence $\pi=(\pi_t)_{t}$ of decision rules on $\mathcal A$ given $\mathcal{S}$ such that $\pi_t = \pi_{t'}$ for all $t,t'$. Given a decision rule $d$, we employ the notation $d^\infty$ for the stationary policy $\pi = (d,d,\dots)$.

Let $\mathcal{L}_a:\mathscr{B}(\mathcal{S})\to \mathscr{B}(\mathcal{S})$ be the Bellman operator associated with action $a\in\mathcal{A}$ and $\mathcal{L}:\mathscr{B}(\mathcal{S})\to \mathscr{B}(\mathcal{S})$ be the optimal Bellman operator defined, $\forall u\in\mathscr{B}(\mathcal{S}), s\in\mathcal{S}$, as
\begin{align*}
     (\mathcal{L}_a u)(s) &:= r(s,a) + \sum_{s'\in\mathcal{S}} p(s'|s,a) u(s'),\\
     (\mathcal{L}u)(s) &:= \max_{a\in\mathcal{A}}\{(\mathcal{L}_a u)(s)\}.
\end{align*}
The operator $\mathcal{L}$ is monotonic, $u\leq v \implies \mathcal{L}u \leq \mathcal{L}v$, and non-expansive, $\operatorname{sp}(\mathcal{L}u - \mathcal{L}v) \leq \operatorname{sp}(u - v)$ and $\|\mathcal{L}u - \mathcal{L}v\|_\infty \leq \|u - v\|_\infty$ $\forall u,v\in\mathscr{B}(\mathcal{S})$.

There are several types of MDPs in the literature, the most common ones being \emph{finite-horizon} MDPs, \emph{infinite-horizon discounted} MDPs, and \emph{infinite-horizon undiscounted} MDPs~\cite{puterman2014markov}. In finite-horizon (also known as tabular) MDPs, the agent interacts with the environment for a finite pre-determined number of time steps $H$, which is called the \emph{horizon}. The standard criteria to evaluate the performance of the agent is the \emph{expected total reward} $V_1^\pi(s)$ over the horizon $H$ when executing policy $\pi$ with initial state $s\in\mathcal{S}$,
\begin{align*}
    V_1^\pi(s) := \mathbb{E}_{s_1=s}^\pi\left[\sum_{t=1}^{H} r(s_t,a_t)\right],
\end{align*}
where $\mathbb{E}_{s}^\pi$ denotes the expectation with respect to the probability distribution over trajectories $(s_1,a_1,s_2,a_2,\dots)$ with $s_1 = s$, $a_t\sim \pi_t(\cdot|s_t)$, and $s_{t+1}\sim p(\cdot|s_t,a_t)$. A policy $\pi^\varepsilon$ is $\varepsilon$-\emph{optimal} if $V_1^{\pi^\varepsilon}(s) \geq V_1^\pi(s) - \varepsilon$ for all $\pi$ and $s\in\mathcal{S}$, and the optimal total expected reward is $V_1^\ast(s) = \sup_\pi V_1^\pi(s)$. 

In infinite-horizon MDPs, both discounted and undiscounted, the agent can interact with the environment for an infinite amount of time steps, meaning that $H=\infty$. For discounted MDPs, the reward the agent receives at later time steps $t$ is discounted by a factor of $\gamma^t$, where $\gamma\in[0,1)$. The parameter $\Gamma := (1-\gamma)^{-1}$ is called the \emph{effective time horizon}. The standard criteria in evaluating the agent's performance is the \emph{discounted expected total reward} $V_1^{\pi,\gamma}(s)$ when executing policy $\pi$ with initial state $s\in\mathcal{S}$,
\begin{align*}
    V_1^{\pi,\gamma}(s) := \mathbb{E}_{s_1=s}^\pi\left[\sum_{t=1}^\infty \gamma^{t-1} r(s_t,a_t)\right].
\end{align*}
A policy $\pi^\varepsilon$ is $\varepsilon$-\emph{optimal} if $V_1^{\pi^\varepsilon,\gamma}(s) \geq V_1^{\pi,\gamma}(s) - \varepsilon$ for all $\pi$ and $s\in\mathcal{S}$, and the optimal total expected reward is $V_1^{\ast,\gamma}(s) = \sup_\pi V_1^{\pi,\gamma}(s)$. 

Regarding undiscounted MDPs, there is no discount factor, meaning that $\gamma=1$. Here the standard criteria to evaluate the performance of the agent is the \emph{average reward} (or gain) $g^\pi(s)$ when executing policy $\pi$ with initial state $s\in\mathcal{S}$,
\begin{align*}
    g^\pi(s) := \lim_{T\to\infty}\mathbb{E}_{s_1=s}^\pi\left[\frac{1}{T}\sum_{t=1}^{T} r(s_t,a_t) \right].
\end{align*}
A policy $\pi^\varepsilon$ is $\varepsilon$-optimal if $g^{\pi^\varepsilon}(s) \geq g^\pi(s) - \varepsilon$ for all $\pi$ and $s\in\mathcal{S}$, and the optimal average reward is $g^\ast(s) = \sup_\pi g^\pi(s)$.

There are several subtypes of infinite-horizon undiscounted MDPs, see~\cite[\S~8.3]{puterman2014markov}. In this work, we focus on \emph{weakly communicating}\footnote{This means that the states can be partitioned into two subsets, one in which all states are transient under any stationary policy, and the other in which any state is reachable from any other state under some stationary policy.} MDPs, for which the optimal gain $g^\ast(s)$ is state independent, i.e., $g^\ast(s) = g^\ast$ for all $s\in\mathcal{S}$, and any optimal policy $\pi^\ast$ has constant gain. Moreover, for any stationary policy $d^\infty$, there exists a function $h^{d^\infty}:\mathcal{S}\to\mathbb{R}$ called \emph{bias} that, together with the gain $g^{d^\infty}$, satisfy the Bellman equations
\begin{align*}
    g^{d^\infty}(s) &= \sum_{s'\in\mathcal{S}} \sum_{a\in\mathcal{A}} d(a|s') p(s'|s,a) g^{d^\infty}(s'),\\ 
    h^{d^\infty} &= \mathcal{L}_d h^{d^\infty} - g^{d^\infty}.
\end{align*}
There is an optimal stationary deterministic policy $(d^\ast)^\infty$ for which $(g^\ast, h^\ast) = (g^{(d^\ast)^\infty}, h^{(d^\ast)^\infty})$ satisfy the Bellman equation $h^\ast = \mathcal{L}h^\ast - g^\ast \mathbf{1}$.

\section{Computing optimal policies under a generative model}
\label{sec:computing_optimal_policies}

One of the main problems associated with MDPs is that of computing $\varepsilon$-optimal policies, for which several algorithms have been proposed~\cite{ bellman1966dynamic,watkins1992q,meyn1997policy, kearns1998finite, sutton1999policy,lagoudakis2003least,bertsekas2011approximate,azar2012dynamic,puterman2014markov,silver2014deterministic, schulman2015trust,  jang2019q}. The main interest is thus on the performance of the computed policy~\cite{sutton1998reinforcement,Kearns1999finite,Kearns2002near}. Several input access models can be considered for computing optimal policies. In this paper, we consider the so-called \emph{generative model}~\cite{kearns1998finite,Kearns2002sparse,kakade2003sample} where one has full knowledge of state and action spaces $\mathcal{S},\mathcal{A}$ and of the reward function $r:\mathcal{S}\times\mathcal{A}\to[0,1]$, but the transition probabilities $p(\cdot|s,a)$ can only be accessed through an oracle. In this scenario one is usually concerned with the sample complexity of employing such an oracle. In the classical setting, one has access to an oracle $\mathcal{C}_p$ that, on input $(s,a)\in\mathcal{S}\times\mathcal{A}$, returns $s'\in\mathcal{S}$ with probability $p(s'|s,a)$. 
In the quantum setting, one has access to an oracle $\mathcal{Q}_p$ (and its inverse) called quantum accessible-environment~\cite{wang2021quantum,wiedemann2022quantum, jerbi2022quantum, zhong2023provably} which is a unitary operator that acts, $\forall (s,a)\in\mathcal{S}\times\mathcal{A}$, as
\begin{align}\label{eq:quantum_sampling_oracle}
    \mathcal O_{p}: \ket{s}\ket{a}\ket{\bar 0}\rightarrow \sum_{s'\in\mathcal S} \sqrt{p(s'\vert s, a)} \ket{s}\ket{a}\ket{s'}. 
\end{align}
%

Here we revise known classical algorithms and propose novel quantum algorithms for computing optimal policies, which will serve as subroutines in learning MDPs in an online setting in \Cref{sec:intro_online_learning}.

\subsection{Finite-horizon MDPs} 

There is a long list of works that studied the classical query complexity of obtaining optimal policies~\cite{Kearns1999finite,Kearns2002sparse,GheshlaghiAzar2013,wang2017randomized,Sidford2018variance,sidford2018near,li2020breaking}. For finite-horizon MDPs, Sidford et al.~\cite{sidford2018near} obtained a sample-optimal algorithm that outputs an $\varepsilon$-optimal policy with $\widetilde{O}\big(\frac{H^3SA}{\varepsilon^2} \big)$ queries to $\mathcal{C}_p$, matching the lower bounds from~\cite{GheshlaghiAzar2013,sidford2018near} up to polylogarithmic factors. 
\begin{fact}[\cite{sidford2018near}]\label{fact:finite_horizon_classical}
    Let $\langle\mathcal{S},\mathcal{A},p,r,H\rangle$ be a finite-horizon MDP. There is a classical algorithm that outputs an $\varepsilon$-optimal policy with probability $1-\delta$ and query complexity (up to $\poly\log\log$ factors)
    \begin{align*}
        \widetilde{O}\left(\frac{H^3SA}{\varepsilon^2}\log\!\left(\frac{HSA}{\delta\varepsilon}\right)\log\!\left(\frac{H}{\varepsilon}\right)\right).
    \end{align*}
\end{fact}

On the other hand, the problem of computing optimal policies under a quantum generative model for finite-horizon MDPs is still mostly unexplored. 
As one of our main results, we design an improved quantum algorithm for computing optimal policies for tabular MDPs. 
\begin{result}\label{res:res1}
    Let $\langle\mathcal{S},\mathcal{A},p,r,H\rangle$ be a finite-horizon MDP. There is a quantum algorithm that outputs an $\varepsilon$-optimal policy with probability $1-\delta$ with query complexity (up to $\poly\log\log$ factors)
    \begin{align*}
        \widetilde{O}\left(\!\min\!\left\{\frac{H^{2.5}SA}{\varepsilon},\frac{H^3S\sqrt{A}}{\varepsilon} \right\}\log^2\!\left(\frac{HSA}{\delta\varepsilon} \right)\right).
    \end{align*} 
\end{result}
\begin{proof}
    See \Cref{sec:optimal_policies_finite-horizon}. 
\end{proof}

Our quantum algorithm is in reality the combination of two different quantum algorithms. The first one is a quantised version of the standard value iteration~\cite{puterman2014markov}, where one solves the equations
\begin{align*}
    u_t(s) = \max_{a\in\mathcal{A}} \left\{r(s,a) + \sum_{s'\in\mathcal{S}} p(s'|s,a)u_{t+1}(s') \right\}
\end{align*}
for all $s\in\mathcal{S}$ and $t=H,H-1,\dots,1$ with initial condition $u_{H+1}\equiv 0$. The optimal policy is then 
\begin{align*}
    \pi_t(s) \in \argmax_{a\in\mathcal{A}} \left\{r(s,a) + \sum_{s'\in\mathcal{S}} p(s'|s,a)u_{t+1}(s') \right\}.
\end{align*}
By employing the quantum mean estimation subroutine of Kothari and O'Donnell~\cite{Kothari2023mean}, we estimate $\sum_{s'\in\mathcal{S}} p(s'|s,a)u_{t+1}(s')$ with error $\frac{\varepsilon}{H}$ using $O\big(\frac{H^2}{\varepsilon}\big)$ queries to $\mathcal{Q}_p$. This quantum operation is nested into the quantum minimum finding subroutine of D\"urr and H\o{}yer~\cite{durr1996quantum} in order to find $u_t(s)$, which uses $O\big(\frac{H^2\sqrt{A}}{\varepsilon}\big)$ queries to $\mathcal{Q}_p$. By repeating this procedure for all $t\in[H]$ and $s\in\mathcal{S}$, we arrive at one of the final complexities in \Cref{res:res1}.

Our second quantum algorithm is a quantised version of the modern value iteration algorithms of~\cite{sidford2018near,li2020breaking}. Their backward induction algorithms work in \emph{epochs}: by starting the $k$-th epoch with functions $u^{(k-1)}_{1},\dots,u^{(k-1)}_{H}$ such that $0 \leq V^\ast_t(s) - u^{(k-1)}_{t}(s) \leq 2\epsilon_k$ for all $s\in\mathcal{S}$ and $t\in[H]$, their algorithms produce $u^{(k)}_{1},\dots,u^{(k)}_{H}$ such that $0 \leq V_t^\ast(s) - u^{(k)}_{t}(s) \leq \epsilon_k$ by the end of the $k$-th epoch, thus halving the initial error. By letting $u^{(0)}_1, \dots, u^{(0)}_H \equiv 0$ at the first epoch and noticing that $V_t^\ast(s) \leq H$, only $O\big({\log}\frac{H}{\varepsilon}\big)$ epochs are needed for a final $\varepsilon$-approximation.

The algorithm from \cite{sidford2018near} uses three crucial techniques: \emph{monotonicity}, \emph{variance reduction}, and \emph{total-variance} techniques. The monotonicity technique means maintaining the monotonicity condition $u^{(k)}_t(s) \leq V_t^{\pi^{(k)}}(s) \leq V_t^\ast(s)$, which yields an $\epsilon$-optimal policy, otherwise an $\epsilon$-optimal value function can yield an $2\epsilon H$-optimal greedy policy in the worst case~\cite{Singh1994upper,bertsekas2012dynamic}.

Naively (like in the standard value iteration), at each time step $t$ of each epoch $k$ one could estimate $\sum_{s\in\mathcal{S}}p(s'|s,a)u^{(k)}_t(s')$ up to additive error $\frac{\epsilon_k}{H}$ in order to obtain $u_{1}^{(k)},\dots,u_{H}^{(k)}$ with error $\epsilon_k$. Since $\|u^{(k)}_t\|_\infty \leq H$, by a Hoeffding bound $\widetilde{O}\big(\frac{H^4}{\epsilon_k^2}\big)$ samples would suffice for each $t\in[H]$, to a total of $\widetilde{O}\big(\frac{H^5}{\epsilon_k^2}\big)$ samples for all $t$. The variance reduction technique from~\cite{sidford2018near} rewrites the standard backward induction iteration as
\begin{align*}
    u_{t}^{(k)}(s) &\gets \max_{a\in\mathcal{A}}\bigg\{r(s,a) + \sum_{s'\in\mathcal{S}}p(s'|s,a) u^{(k-1)}_{t+1}(s')\\
    &+ \sum_{s'\in\mathcal{S}}p(s'|s,a)(u^{(k)}_{t+1}(s') - u^{(k-1)}_{t+1}(s'))\bigg\}.
\end{align*}
The main idea is that $\sum_{s'\in\mathcal{S}}p(s'|s,a) u^{(k-1)}_{t}(s')$ for all $t\in[H]$ can be computed at the beginning of the epoch using the same batch of $\widetilde{O}\big(\frac{H^4}{\epsilon_k^2}\big)$ samples, which saves a factor of $H$. Regarding the quantity $\sum_{s'\in\mathcal{S}}p(s'|s,a)(u^{(k)}_{t+1}(s') - u^{(k-1)}_{t+1}(s'))$, since $\|u^{(k)}_{t+1} - u^{(k-1)}_{t+1}\|_\infty \leq \epsilon_k$, it can be approximated up to error $\frac{\epsilon_k}{2H}$ using $\widetilde{O}(H^2)$ samples for each $t\in[H]$, leading to a total of $\widetilde{O}(H^3)$ samples.

Finally, in order to reduce the sample complexity dependence on $H$ down to $O(H^3)$, \cite{sidford2018near} show that the true error accumulates as $\sqrt{H^3/m}$ given $m$ samples, much less than the naive sum of estimation errors at each time step. This means that one does not require the update error to be $\frac{\epsilon_k}{H}$ at each time step and $m = O\big(\frac{H^3}{\epsilon_k^2}\big)$ samples suffices for a total error $\epsilon_k$.

In our quantum algorithm, the quantities $\sum_{s'\in\mathcal{S}}p(s'|s,a)(u^{(k)}_{t+1}(s') - u^{(k-1)}_{t+1}(s'))$ are estimated using the aforementioned subroutine of Kothari and O'Donnell~\cite{Kothari2023mean} with $\widetilde{O}(H)$ queries to the oracle $\mathcal{Q}_p$, for a total of $\widetilde{O}(H^2)$ queries. Similarly, the quantities $\sum_{s'\in\mathcal{S}}p(s'|s,a) u^{(k-1)}_{t}(s')$ are estimated using the quantum mean estimation from~\cite{Kothari2023mean}. However, unlike the classical case where the same batch of samples can be reused to approximate all these quantities, in the quantum setting we must start each calculation anew, which incurs an extra $H$ factor in the query complexity and hinders a full quadratic advantage in $H$.

We remark that the concurrent work of Luo et al.~\cite{luo2025quantum}, which appeared online shortly before ours, took the same approach and arrived at the same query complexity as \Cref{res:res1}. Although their result applies to the more general case of time-dependent MDPs, it is straightforward to generalise \Cref{res:res1} to such a scenario. We also mention that~\cite{luo2025quantum} proved the quantum query lower bound $\Omega\big(\frac{H^{1.5}S\sqrt{A}}{\varepsilon}\big)$. Hence our and Luo et al.~\cite{luo2025quantum} result might not be tight and could be improved. 

\subsection{Infinite-Horizon Discounted MDPs}

The scenario for computing optimal policies for infinite-horizon discounted MDPs have a similar history compared to finite-horizon MDPs. A long list of works~\cite{kearns1998finite,GheshlaghiAzar2013,wang2017randomized,Sidford2018variance} slowly improved the classical complexity until Sidford et al.~\cite{sidford2018near} obtained a sample-optimal algorithm for outputting an $\varepsilon$-optimal policy using $\widetilde{O}\big(\frac{\Gamma^3SA}{\varepsilon^2}\big)$ queries to $\mathcal{C}_p$, matching the lower bounds of~\cite{GheshlaghiAzar2013} up to polylogarithmic factors. The algorithm of Sidford et al.~\cite{sidford2018near} is very similar to sample-optimal algorithm for finite-horizon MDPs explained in the previous section (the same work proposed both algorithms), so we shall not dwell on it. The main difference between both algorithms is that for infinite-horizon discounted MDPs, one needs only to compute a single quantity $\sum_{s'\in\mathcal{S}} p(s'|s,a) u^{(k)}_0(s')$ at the beginning of each epoch, instead of $H$ different ones as for finite-horizon MDPs.

\begin{fact}[\cite{sidford2018near}]\label{fact:discounted_classical}
    Let $\langle\mathcal{S},\mathcal{A},p,r,\gamma\rangle$ be an infinite-horizon discounted MDP. There is a classical algorithm that outputs an $\varepsilon$-optimal policy with probability $1-\delta$ and query complexity (up to $\poly\log\log$ factors)
    \begin{align*}
        \widetilde{O}\left(\frac{\Gamma^3SA}{\varepsilon^2}\log\!\left(\frac{\Gamma SA}{\delta\varepsilon}\right)\log\!\left(\frac{\Gamma}{\varepsilon}\right)\right).
    \end{align*}
\end{fact}

In the quantum setting, there are only a few works related to the problem of computing optimal policies for infinite-horizon discounted MDPs~\cite{wiedemann2022quantum,wang2021quantum,jerbi2022quantum,cherrat2023quantum}. 
Within the generative setting, we are only aware of the work of Wang et al.~\cite{wang2021quantum}, who quantised the standard value iteration~\cite{puterman2014markov} and the modern value iteration algorithms from~\cite{sidford2018near,li2020breaking}, our quantum algorithm for finite-horizon MDPs thus being inspired by their approach. Their quantum standard value iteration algorithm nests an old version of quantum mean estimation~\cite{montanaro2015quant} within quantum minimum finding~\cite{durr1996quantum}, while their quantum version of modern value iteration employs solely quantum mean estimation~\cite{montanaro2015quant}. 
\begin{fact}[\cite{wang2021quantum}]\label{fact:discounted_quantum}
    Let $\langle\mathcal{S},\mathcal{A},p,r,\gamma\rangle$ be an infinite-horizon discounted MDP. There is a quantum algorithm that outputs an $\varepsilon$-optimal policy with probability $1-\delta$ and query complexity (up to $\poly\log\log$ factors)
    \begin{align*}
        \widetilde{O}\left({\min}\left\{\frac{\Gamma^{1.5}SA}{\varepsilon},\frac{\Gamma^3S\sqrt{A}}{\varepsilon}\right\}\log^4\left(\frac{\Gamma S A}{\delta\varepsilon}\right)\right).
    \end{align*}
\end{fact}

\subsection{Infinite-Horizon Undiscounted MDPs}

The list of classical algorithms for computing optimal policies for infinite-horizon undiscounted MDPs is as extensive as in the previous settings~\cite{wang2017primal,jin2020efficiently,jin2021towards,wang2022near,zhang2023sharper,li2024stochastic,wang2024optimal}, the best complexity being $\widetilde{O}\big(\frac{\Lambda SA}{\varepsilon^2}\big)$ due to Zureck and Chen~\cite{zurek2023span}, matching the lower bounds of~\cite{jin2021towards,wang2022near}. Here $\operatorname{sp}(h^\ast) \leq \Lambda$ is an upper-bound on the span of the optimal bias function $h^\ast(s)$, which is always finite under the mild assumption of a weakly communicating MDP used in several RL works~\cite{bartlett2009regal,ortner2012online,lakshmanan2015improved,fruit2018efficient}.
\begin{fact}[{\cite{zurek2023span}}] \label{fact:undiscounted_classical}
    Let $\langle \mathcal{S},\mathcal{A},p,r\rangle$ be an infinite-horizon undiscounted average-reward weakly communicating MDP with $\operatorname{sp}(h^\ast) \leq \Lambda$. There is a classical algorithm that outputs an $\varepsilon$-optimal policy with probability $1-\delta$ and query complexity (up to $\poly\log\log$ factors)
    \begin{align*}
        \widetilde{O}\left(\frac{\Lambda SA}{\varepsilon^2}\log\left(\frac{SA}{\delta\varepsilon}\right) \right).
    \end{align*}
\end{fact}

On the other hand, we are not aware of any related quantum works for this problem. As another contribution, we propose a quantum algorithm for computing optimal policies for infinite-horizon undiscounted MDPs. For such, we further assume the optimal Bellman operator $\mathcal{L}$ is a $J$-stage $\nu$-span contraction for $J\in\mathbb{N}$ and $\nu\in[0,1)$, meaning $\operatorname{sp}(\mathcal{L}^J u - \mathcal{L}^J  v) \leq  \nu \operatorname{sp}(u - v)$ $\forall u,v \!\in\!\mathscr{B}(\mathcal{S})$.
\begin{result}\label{res:res2}
    Let $\langle \mathcal{S},\mathcal{A},p,r\rangle$ be an infinite-horizon undiscounted average-reward weakly communicating MDP with $\operatorname{sp}(h^\ast) \leq \Lambda$. Assume that its associated optimal Bellman operator $\mathcal{L}$ is a $J$-stage $\nu$-span contraction for constant $J\in\mathbb{N}$ and $\nu\in[0,1)$. There is a quantum algorithm that outputs an $\varepsilon$-optimal policy with probability $1-\delta$ and query complexity (up to $\poly\log\log$ factors)
    \begin{align*}
        \widetilde{O}\left(\frac{\Lambda S\sqrt{A}}{\varepsilon}\log\left(\frac{1}{\varepsilon}\right)\log^2\left(\frac{SA}{\delta}\right)\right).
    \end{align*}
\end{result}
\begin{proof}
    See \Cref{sec:value_iteration}.
\end{proof}
Similar to the finite-horizon and discounted settings, our quantum algorithm is a quantised version of the standard value iteration~\cite{puterman2014markov} where $\sum_{s'\in\mathcal{S}} p(s'|s,a) u_t(s')$ is approximated using quantum mean estimation~\cite{Kothari2023mean} and the maximum over $a\in\mathcal{A}$ is computed using quantum minimum finding~\cite{durr1996quantum}. The span-contraction assumption on $\mathcal{L}$ guarantees that value iteration converges to an approximate optimal solution after a finite number of iterations. Furthermore, throughout our algorithm, we maintain the condition $\operatorname{sp}(u_t) \leq 2\operatorname{sp}(h^\ast)$ over the iterated functions $(u_t)_t$.

Although not as mild an assumption as only requiring  $\operatorname{sp}(h^\ast) \leq \Lambda$ to be finite like in the classical algorithm of~\cite{zurek2023span} (\Cref{fact:discounted_classical}), assuming $\mathcal{L}$ to be a $J$-stage $\nu$-span contraction can be enforced under some conditions on the stochastic kernel $p$ (see~\citep[Theorems~8.5.2 \&~8.5.3]{puterman2014markov}) and has been used in previous works~\cite{puterman2014markov,fruit2018efficient}. We leave as future work weakening the assumptions of \Cref{res:res2}.

\section{Online learning of MDPs}
\label{sec:intro_online_learning}

As our main contribution, we propose improved classical and quantum algorithms for learning unknown MDPs in an online fashion. Here one is interested in the performance of the learning algorithm during the learning process, an area of study that finds applications in several topics~\cite{crammer2003online, ying2006online,liang2006fast,tekin2010online,li2014online, ouyang2017learning,aaronson2018online,lim2022quantum}. Interacting with an unknown MDP can be naturally framed as an online learning problem: at each time step $t\in\mathbb{N}$, the environment is at some state $s_t\in\mathcal{S}$ and the agent must choose an action $a_t\sim \pi_t(\cdot|s_t)$ according to some policy $\pi = (\pi_t)_t$ in order to receive as large a reward $r(s_t,a_t)$ as possible, after which the environment transitions to a new state $s_{t+1}\sim p(\cdot|s_t,a_t)$. However, since the agent does not know the underlying transition probabilities $p(\cdot|s,a)$, an exploration-exploitation trade-off arises: should the agent explore poorly understood states and actions in order to improve its understanding of the MDP and improve its future performance via better policies, or exploit its current knowledge to optimise short-term rewards~\cite{Kearns2002near}. 


The task of obtaining large rewards is usually reframed as minimising some measure of how far the agent is from being optimal~\cite{valiant1984theory,Littlestone1988learning,li2008knows,auer2008near}. One of the main measures is that of \emph{regret}, which is the difference between the agent's (expected) rewards compared to that of the optimal policy. For finite-horizon MDPs, the regret over $K$ episodes (or $T=KH$ time steps) is defined as
\begin{align*}
    \operatorname{Regret}_{H}(T) := \sum_{k=1}^K \big(V_1^\ast(s^{(k)}_1) - V_1^{\pi^{(k)}}(s^{(k)}_1) \big),
\end{align*}
where $\pi^{(k)}$ and $s_1^{(k)}$ are the policy and the environment's initial state in the $k$-th episode, respectively. 
For infinite-horizon discounted MDPs, the regret over $T$ time steps is
\begin{align*}
    \operatorname{Regret}_{\infty}^\gamma(T) := \sum_{t=1}^T \big(V_1^{\ast,\gamma}(s_t) - V_1^{\pi,\gamma}(s_t) \big),
\end{align*}
where $s_1$ is the initial state and $\pi$ the employed policy. Finally, for infinite-horizon undiscounted MDPs, the regret over $T$ time steps is\footnote{$Tg^\ast - \sum_{t=1}^T r(s_t,a_t)$ is also used \cite{auer2006logarithmic,auer2008near}.}
\begin{align*}
    \operatorname{Regret}_{\infty}(T) &:= Tg^\ast - \mathbb{E}_{s_1=s}^\pi\left[\sum_{t=1}^T r(s_t,a_t)\right]\\
    &= \sum_{t=1}^T \left(g^\ast - g^{\pi_t^\infty}(s_t) \right).
\end{align*}
%

In the classical setting, there are several proposed algorithms that achieve regret bounds sublinear in the number of time steps $o(T)$ for finite-horizon MDPs~\cite{auer2008near,osband2014model,azar2017minimax,jin2018qlearning,zanette2019tighter,efroni2019tight,jin2020provably,yang2020reinforcement}. To our knowledge, the best regret bounds are due Azar et al.~\cite{azar2017minimax} and Zanette and Brunskill~\cite{zanette2019tighter}, which match the lower bound $\Omega(\sqrt{HSAT})$ from~\cite{auer2008near} for a certain range of parameters. For infinite-horizon discounted MDPs, the scenario is similar. Several different algorithms have been developed~\cite{brafman2002r,strehl2006pac,Wang2020Qlearning,zhang2021model,liu2020regret,szita2011agnostic,lattimore2012pac}, with He et al.~\cite{he2021nearly} reaching a regret bound of $\widetilde{O}(\sqrt{\Gamma^3 SAT})$ and thus matching a lower bound from the same work for a certain range of parameters. For infinite-horizon undiscounted MDPs, the landscape is richer. The algorithm of Auer et al.~\cite{auer2008near} ranked among the main ones to be first proposed, although it applied to a smaller class of MDPs called \emph{communicating}~\cite{puterman2014markov}. Bartlett and Tewari~\cite{bartlett2009regal} achieved the regret bound $\widetilde{O}(\Lambda\sqrt{S^2AT})$ for the broader class of weakly communicating MDPs, where $\operatorname{sp}(h^\ast) \leq \Lambda$. Later, Refs.~\cite{ortner2012online,lakshmanan2015improved} adapted the algorithm from~\cite{auer2008near} for the case when the state space is $[0,1]^D$ under the assumption that $r$ and $p$ are H\"older continuous. The algorithms of~\cite{bartlett2009regal,ortner2012online,lakshmanan2015improved} are, however, time inefficient, which was later fixed by Fruit et al.~\cite{fruit2018efficient}, whose time-efficient algorithm still maintains the regret bound $\widetilde{O}(\Lambda\sqrt{S^2AT})$. 
We summarise several results from the literature in \Cref{table:results_finite-horizon}.

In the quantum setting, the number of works on online learning of MDPs is much smaller. We are only aware of \cite{zhong2023provably,ganguly2023quantum} for finite-horizon MDPs and~\cite{ganguly2025quantum} for infinite-horizon undiscounted MDPs (there are further works on multi-armed bandits~\cite{Lumbreras2022multiarmedquantum,dai2023quantum,wan2023quantum,su2025quantum}). By leveraging a modified environment-agent interaction model (described below), both works obtained $\poly(S,A,H,\log{T})$ regret, exponentially better in $T$ (see \Cref{table:results_finite-horizon}).


\subsection{Our RL model} 
\label{sec:our_rl_model}

Even though the interaction model between agent and environment is straightforward in the classical setting, the same is not true in the quantum one. Ideally, we would like to employ the quantum oracle $\mathcal{Q}_p$ from \eqref{eq:quantum_sampling_oracle} to explore the MDP in superposition, which inevitably leads to some apparent conundrums. For once, a repeated interaction between agent and environment through quantum oracles will lead to a superposition of different possible rewards. It is then not clear what the final regret is, especially if the agent performs non-trivial intermediary quantum gates and measurements. More critically, we would like to employ quantum subroutines that make use of the inverse of $\mathcal{Q}_p$~\cite{brassard2002quantum,montanaro2015quant,cornelissen2022near,Kothari2023mean}. There is absolutely no equivalent inverse operation in the standard classical model and, moreover, one would imagine that ``inverting'' quantum operations could potentially ``undo'' the accumulated regret.

We solve these issues by virtually separating the exploration phase from the policy learning phase. In the standard classical model, these are usually perform simultaneously: while interacting with the environment, the agent keeps track of all state-action pairs $(s_t,a_t)$ observed so far to come up with an estimation $\widetilde{p}$ of the true transition probability, which is then used to obtain an approximate optimal policy. In our model, however, such interaction is split into \emph{two} types of phases: \emph{classical online (exploration)} phases and \emph{classical/quantum offline (generative)} phases.
\begin{enumerate}[wide = 0pt, leftmargin = *]
    \item \textbf{Online phase:} corresponds exactly to the standard classical agent-environment interaction, during which the agent accumulates regret. More specifically, the agent chooses action $a_t$ at state $s_t$, obtains reward $r(s_t,a_t)$, and observes the new state $s_{t+1}$. The interaction is completely classical and lasts as long as the agent desires.
    \item \textbf{Offline phase:} the agent is \emph{free} to interact with the environment using the oracle $\mathcal{Q}_p$ \emph{without} accumulating regret. This means that the agent is free to prepare any quantum state and have the environment apply $\mathcal{Q}_p$ (or its inverse) onto such quantum state, plus any quantum gate from a universal gate set. The transition from online to offline phase and vice versa can be done at any moment. During the offline phase, the agent can \emph{only use the oracle $\mathcal{Q}_p$ and its inverse at most $O(\tau^\beta)$ times,} where $\tau$ is length of the previous \emph{online phase} and $\beta\in[0,\infty)$ is a fixed parameter called \emph{budget}.
\end{enumerate}

By separating the accumulation of regret from the policy learning phase as above, all the problems previously highlighted are avoided. The result is an alternation between online and offline phases. The offline phase can also be referred to as ``generative'' because the agent is free to interact with the environment in a generative fashion and, as described below, we employ the algorithms from \Cref{sec:computing_optimal_policies} to compute an $\varepsilon$-optimal policy during this phase. The restriction on the number of applications of $\mathcal{Q}_p$ during the offline phase is vital to guarantee a meaningful online-learning model, otherwise the agent could obtain a policy as close to optimal as needed and thus incur a regret as small as desired. In order to freely use the probability oracle $\mathcal{Q}_p$, the agent must first pay a price in committing to actions and incurring into regret. The level of freedom in offline phases or, equivalently, the level of commitment required in online phases, is regulated by the budget parameter $\beta\in[0,\infty)$. Larger budgets $\beta$ lead to more computational time, better policies, and consequently smaller regrets. Finally, we also consider a classical version of the above model wherein, during offline phases, the agent has access to the sampling oracle $\mathcal{C}_p$.

The above online-offline RL model clearly generalises the standard RL model wherein the length of the offline phase is zero. 
If the classical RL environment can be viewed as a computer program with a source code, then simulation-like access to $\mathcal{Q}_p$ can be obtained by translating the source code into a Boolean circuit that draws samples from the distribution $p$ and by employing standard techniques in quantum computing to convert such a Boolean circuit into a quantum circuit implementing $\mathcal{Q}_p$. Alternatively, from a more physical perspective, the environment could potentially be described by a quantum open system and access to $\mathcal{Q}_p$ be obtained via Markov chain techniques~\cite{Temme2025quantizedmarkov}. More generally, our online-offline RL model can describe the interaction between an agent and a (black-box) quantum computer in an online-fashion wherein the agent has limited access to the quantum computer due to, e.g., high traffic or monetary reasons, interspersed with periods of free interaction due to the quantum computer being idle.

\paragraph{A critique on previous quantum RL works.} Although the splitting of exploration and learning processes into two separate phases can be seen as powerful, it solves the fundamental problem that the regret is not properly defined when the agent interacts with the environment in a quantum fashion. Unfortunately, as far as we know, there is currently \emph{no meaningful and useful} quantum RL model that does not assume a regret-free offline phase. \emph{All} prior works with $\operatorname{poly}\log{T}$ regret had to assume, even if implicitly, a regret-free offline phase. This is due to a fundamental flaw in the algorithms of~\cite{zhong2023provably,ganguly2023quantum,ganguly2025quantum}, which we quickly review.

Refs.~\cite{zhong2023provably,ganguly2023quantum,ganguly2025quantum} proposed quantum RL algorithms with access to the oracle $\mathcal{Q}_p$ and its inverse, their main idea being that quantum multi-dimensional amplitude estimation~\cite{van2021quantum,hamoudi2021quantum} can be employed to learn the stochastic kernels $p(\cdot|s,a)$ quadratically faster than a classical counterpart. Their quantum algorithms collect quantum samples of the form $\mathcal{Q}_p|s_t\rangle,\mathcal{Q}_p^\dagger|s_t\rangle$ while exploring the structure of the MDP, see \cite[Algorithm~1, Line~8]{zhong2023provably}, \cite[Algorithm~1, Line~12]{ganguly2023quantum}, and \cite[Algorithm~2, Line~11]{ganguly2025quantum}.\footnote{The RL model of~\cite{ganguly2023quantum} is not as clear. They assume access to ``q-random variables'' (a concept from~\cite{hamoudi2021quantum} which includes access to oracle $\mathcal{Q}_p$) at the agent's end, but also assume ``that the quantum environment can generate multiple copies of the next state superposition and so all q-random variables and the next state measurement can be obtained'', which we understand as gathering $\mathcal{Q}_p|s_t\rangle,\mathcal{Q}_p^\dagger|s_t\rangle$.} Then, when enough quantum samples have been collected, the authors argue in \cite[Algorithm~1, Line~11]{zhong2023provably}, \cite[Algorithm~1, Line~6]{ganguly2023quantum}, and \cite[Algorithm~2, Line~21]{ganguly2025quantum} that quantum amplitude estimation is implemented using the stored states $\{\mathcal{Q}_p |s_{t}\rangle,\mathcal{O}^{\dagger}_p |s_{t}\rangle\}_t$. However, this is \emph{incorrect} because quantum amplitude estimation requires sequential and coherent applications of $\mathcal{Q}_p$ and $\mathcal{Q}_p^\dagger$ onto a \emph{single} quantum state, and not access to a collection of states $\{\mathcal{Q}_p|s_t\rangle,\mathcal{Q}_p^\dagger|s_t\rangle\}_t$. The results of~\cite{zhong2023provably,ganguly2023quantum,ganguly2025quantum} are thus wrong. Nonetheless, it is possible to fix these issues by employing a RL model similar to ours, with groups of exploration steps $s_t \to s_{t+1}$ forming an online phase and one use of quantum mean estimation forming an offline phase. Every time \cite[Algorithm~1]{zhong2023provably}, \cite[Algorithm~1]{ganguly2023quantum}, and \cite[Algorithm~2]{ganguly2025quantum} use quantum amplitude estimation~\cite{van2021quantum,hamoudi2021quantum}, they are implicitly entering an offline phase.

Under this correction, the corresponding online phases of~\cite{zhong2023provably,ganguly2023quantum,ganguly2025quantum} are virtually the same as ours. Although~\cite{zhong2023provably} phrased the online interaction in a quantum fashion by using an oracle for policies, there is absolutely no need for such and the agent-environment interaction can be done entirely classically. The RL model of~\cite{zhong2023provably} does have an extra restriction on the offline phase compared to~\cite{ganguly2023quantum,ganguly2025quantum} and ours, in that the environment's state $|s_{t}\rangle$ is fixed from the end of the previous online phase and applications of $\mathcal{Q}_p,\mathcal{Q}_p^\dagger$ are done on such state $|s_{t}\rangle$ only. Here we assume that the agent can manipulate the environment's state during offline phases and apply $\mathcal{Q}_p,\mathcal{Q}_p^\dagger$ onto states that the agent prepares. Finally, the length of offline phases in~\cite{zhong2023provably,ganguly2023quantum,ganguly2025quantum} is linearly proportional to the previous online phase as implicit when they use quantum amplitude estimation.

\subsection{Our regret bounds}

Using our online-offline RL model described above, we propose new classical and quantum algorithms that achieve better regret bounds compared to prior works. Our algorithms for finite- and infinite-horizon MDPs are similar and are explained in \Cref{algo:intro}. They progress in phases, each made up of an online phase and an offline phase. During the $\ell$-th offline phase, an approximately optimal policy $\pi^{(\ell)}$ is computed using the algorithms from \Cref{sec:computing_optimal_policies}. Such policy is then employed in the online phase to accumulate regret for a period of time long enough to allow the computation of a better policy in the next offline phase. If the $\ell$-th online phase lasts $\tau_{\ell}$ time steps, then in the $\ell$-th offline phase one can make $O(\tau_{\ell-1}^\beta)$ calls to the classical or quantum oracles $\mathcal{C}_p$ or $\mathcal{Q}_p$, which, according to \Cref{sec:computing_optimal_policies}, yields an $\varepsilon_{\ell}$-optimal policy with $\varepsilon_\ell = \widetilde{O}(\tau_{\ell-1}^{-\beta/2})$ in the classical setting and $\varepsilon_\ell = \widetilde{O}(\tau_{\ell-1}^{-\beta})$ in the quantum setting (the dependence on other parameters is shown in \Cref{algo:intro}). In order to set the length of the online phases, which determine the number of oracle calls in the offline phases, we follow a doubling trick: the length of the online phase is increased by a constant factor after each episode. For finite-horizon MDPs, the increase factor is $2H$, while for infinite-horizon MDPs it is $2$.


Although very similar, the main difference between the finite- and infinite-horizon versions of \Cref{algo:intro} is that each online phase in the finite-horizon setting must be a multiple of $H$, since the agent explores the MDP in sequences of $H$ steps. This is implicit in the definition of $\operatorname{Regret}_H(T)$, where $V_1^\ast(s_1^{(\ell)}) - V_1^{\pi^{(\ell)}}(s_1^{(\ell)})$ is a difference of rewards over $H$ steps. As a consequence, the agent computes a policy $\pi^{(\ell)}=(\pi_1^{(\ell)},\pi_2^{(\ell)},\dots,\pi_H^{(\ell)})$ for $H$ time steps during the offline phase (hence why we write $\pi_{t~(\operatorname{mod}H)}^{(\ell)}$ in \Cref{algo:intro} to compactly represent the repeated application over all decision rules in $\pi^{(\ell)}$). In the infinite-horizon setting, the agent computes a stationary policy $\pi_\ell^\infty = (\pi_\ell,\pi_\ell,\dots)$ during the offline phase, meaning that the same decision rule $\pi_\ell$ is employed throughout the next online phase.

%
\begin{algorithm*}[t!]
    \caption{Online-learning algorithms for MDPs}
    \label{algo:intro}
    \begin{algorithmic}[1]
    \Require Finite state space $\mathcal{S}$ and action space $\mathcal{A}$, rewards $r:\mathcal{S}\times\mathcal{A}\to [0,1]$, horizon $H$ (finite-horizon), discount factor $\gamma\in[0,1)$ (infinite-horizon discounted), upper bound $\Lambda$ on optimal bias span (infinite-horizon undiscounted).
    
    \State $t \gets 1$ and $\tau_1 \gets 1$
    
    \For{phase $\ell=1, 2, \dots$}
        
        \phase{Generative phase}

        \State \textbf{Finite-horizon:} Using \Cref{fact:finite_horizon_classical} or \cref{res:res1}, choose policy $\pi^{(\ell)}$ such that w.h.p.\
        \begin{align*}
            \|V_1^{\pi^{(\ell)}} - V_1^\ast\|_\infty \leq  \begin{cases}
                \widetilde{O}\Big(\sqrt{\frac{H^2 SA}{\tau_\ell}} \Big) &\text{(classical)},\\
                \widetilde{O}\Big({\min}\Big\{\frac{H^{1.5}SA}{\tau_\ell},\frac{H^2S\sqrt{A}}{\tau_\ell}\Big\} \Big) &\text{(quantum)}
            \end{cases}
        \end{align*}

        \setcounter{ALG@line}{2}
        \State \textbf{Infinite-horizon discounted:} Using \Cref{fact:discounted_classical} or \Cref{fact:discounted_quantum}, choose decision rule $\pi_\ell$ s.t.\ w.h.p.\
        \begin{align*}
            \|V_1^{\pi_\ell^\infty,\gamma} - V_1^{\ast,\gamma}\|_\infty \leq \begin{cases}
                \widetilde{O}\Big(\sqrt{\frac{\Gamma^3 SA}{\tau_\ell}}\Big) &\text{(classical)},\\
                \widetilde{O}\Big({\min}\Big\{\frac{\Gamma^{1.5}SA}{\tau_\ell},\frac{\Gamma^3 S\sqrt{A}}{\tau_\ell}\Big\} \Big) &\text{(quantum)}
            \end{cases} 
        \end{align*}

        \setcounter{ALG@line}{2}
        \State \textbf{Infinite-horizon undiscounted:} Using \Cref{fact:undiscounted_classical} or \cref{res:res2}, choose decision rule $\pi_\ell$ s.t.\ w.h.p.\
        \begin{align*}
            \|g^{\pi_\ell^\infty} - g^\ast \mathbf{1}\|_\infty \leq \begin{cases}
                \widetilde{O}\Big(\sqrt{\frac{\Lambda SA}{\tau_\ell}}\Big) &\text{(classical)},\\
                \widetilde{O}\Big(\frac{\Lambda S\sqrt{A}}{\tau_\ell}\Big) &\text{(quantum)}
            \end{cases} 
        \end{align*}

        \phase{Exploration phase}
        
        \State $\tau_{\ell+1} \gets t$ and observe random initial state $s_{t}$
        \While {$t < 2H\tau_{\ell+1}$ (finite-horizon) / $t < 2\tau_{\ell+1}$ (infinite-horizon)}
            \State Choose action $a_{t}\sim \pi^{(\ell)}_{t~(\operatorname{mod} H)}(s_{t})$ (finite-horizon) or $a_{t}\sim \pi_\ell(s_{t})$ (infinite-horizon)
            \State Obtain reward $r_{t}\gets r(s_{t},a_{t})$, observe next state $s_{t+1}\sim p(\cdot|s_{t},a_{t})$, and $t \gets t + 1$
        \EndWhile
        
    \EndFor
\end{algorithmic}
\end{algorithm*}

\paragraph*{Finite-horizon MDPs}
\begin{result}\label{res:regret_finite_horizon}
    Let $\langle \mathcal{S},\mathcal{A},p,r,H\rangle$ be a finite-horizon MDP. Under the RL model of {\rm \Cref{sec:our_rl_model}} with budget $\beta$, there is a classical algorithm with regret $\operatorname{Regret}_H(T)$ upper-bounded after $T$ steps with probability $1-\delta$ by (up to $\poly\log\log$ terms)
    \begin{align*}
        \begin{cases}
            \widetilde{O}\left(\sqrt{HSAT^{2-\beta}\log\!\frac{SAT}{\delta}\log{T}} \right) &\text{if}~\beta < 2, \\
            \widetilde{O}\left(\sqrt{HSA\log\!\frac{T}{H}\log\!\frac{SAT}{\delta}\log{T}} \right) &\text{if}~\beta = 2, \\
            \widetilde{O}\left(\sqrt{H^{3-\beta}SA\log\!\frac{HSA}{\delta}\log{H}} \right) &\text{if}~\beta > 2.
        \end{cases}
    \end{align*}
    Similarly, there is a quantum algorithm with regret $\operatorname{Regret}_H(T)$ upper-bounded after $T$ steps with probability $1-\delta$ by (up to $\poly\log\log$ terms)
    \begin{align*}
        \resizebox{\hsize}{!}{$\begin{cases}
            \widetilde{O}\!\left(\min\{H^{1.5}SA,H^2S\sqrt{A}\}T^{1-\beta}\log^2\!\frac{SAT}{\delta} \right) &\text{if}~\beta < 1, \\
            \widetilde{O}\!\left(\min\{H^{1.5}SA,H^2S\sqrt{A}\}\log\!\frac{T}{H}\log^2\!\frac{SAT}{\delta} \right) &\text{if}~\beta = 1, \\
            \widetilde{O}\!\left(\min\{H^{2.5-\beta}SA,H^{3-\beta}S\sqrt{A}\}\log^2\!\frac{HSA}{\delta} \right) &\text{if}~\beta > 1.
        \end{cases}$}
    \end{align*}
\end{result}
\begin{proof}
    We start with the classical setting. The total regret over $T = HK$ time steps is
    \begin{align*}
        \operatorname{Regret}_{H}(T) := \sum_{k=1}^K\big( V_1^\ast(s_1^{(k)}) - V_1^{\pi^{(k)}}(s_1^{(k)})\big).
    \end{align*}
    The policy $\pi^{(\ell)}$ is updated every time $t = H 2^\ell$, to a total of $\lceil \log_2{K}\rceil$ times, so that $\ell\in\{1,\dots,\lceil\log_2{K}\rceil\}$ marks the $\ell$-th phase. The policies $\{\pi^{(k)}\}_{k\in[K]}$ from the regret equation above are defined as $\pi^{(k)} = \pi^{(\ell)}$ for $2^{\ell - 1} \leq k < 2^\ell$. At the beginning of phase $\ell$, when the length of the previous online phase is $\tau_{\ell-1} = H 2^{\ell-1}$, we employ \Cref{fact:finite_horizon_classical} with $O(\tau_{\ell-1}^\beta) = O((H2^\ell)^\beta)$ calls to oracle $\mathcal{C}_p$ in order to obtain a policy $\pi^{(\ell)}$ such that, with probability at least $1-\frac{\delta}{\lceil\log_2{K}\rceil}$, for all $s\in\mathcal{S}$,
    \begin{align*}
        &V_1^\ast(s) - B_c\sqrt{\frac{\ln\!\big(\frac{HSA2^\ell}{\delta}\big)\ln(H2^\ell)}{2^{\ell\beta}}} \leq V_1^{\pi^{(\ell)}}(s), \\
        &\text{where}~ B_c = \widetilde{O}\big(\sqrt{H^{3-\beta}SA}\big)
    \end{align*}
    and $\widetilde{O}(\cdot)$ omits $\poly\log\log$ terms. Hence, with probability at least $1-\delta$ over all policy updates, $\operatorname{Regret}_{H}(T)$ is upper-bounded by (up to constant factors)
    \begin{align*}
        &B_c\sum_{k=1}^K \sqrt{\frac{\ln\!\big(\frac{kHSA}{\delta}\big)\ln(kH)}{k^{\beta}}} = \\
        &\begin{cases}
            O\Big(B_c K^{1-\beta/2}\sqrt{\log\!\big(\frac{KHSA}{\delta}\big)\log(KH)}\Big) &\text{if}~\beta < 2, \\
            O\Big( B_c \log(K)\sqrt{\log\!\big(\frac{KHSA}{\delta}\big)\log(KH)}\Big) &\text{if}~\beta = 2,\\
            O\Big(B_c \sqrt{\log\!\big(\frac{HSA}{\delta}\big)\log{H}}\Big) &\text{if}~\beta > 2,
        \end{cases} 
    \end{align*}
    using that $\sum_{k=1}^K \frac{1}{k} = O(\log{K})$, $\sum_{k=1}^K \frac{1}{k^{x}} = O(K^{1-x})$ for $x < 1$, and that $\sum_{k=1}^K \frac{\ln{k}}{k^{x}} < \sum_{k=1}^\infty \frac{\ln{k}}{k^{x}}$ is a constant for $x>1$. The result follows since $KH = T$.

    The quantum setting is very similar. At the beginning of phase $\ell$, when the length of the previous online phase is $\tau_{\ell-1} = H 2^{\ell-1}$, we employ \Cref{res:res1} with $O(\tau_{\ell-1}^\beta) = O((H2^\ell)^\beta)$ calls to oracle $\mathcal{Q}_p$ in order to obtain a policy $\pi^{(\ell)}$ such that, with probability at least $1-\frac{\delta}{\lceil\log_2{K}\rceil}$, for all $s\in\mathcal{S}$,
    \begin{align*}
        &V_1^\ast(s) - B_q\frac{\ln^2\!\big(\frac{HSA2^\ell}{\delta}\big)}{2^{\ell\beta}} \leq V_1^{\pi^{(\ell)}}(s) , \\
        &\text{where}~ B_q = \widetilde{O}\big({\min}\{H^{2.5-\beta}SA,H^{3-\beta}S\sqrt{A}\}\big)
    \end{align*}
    and $\widetilde{O}(\cdot)$ omits $\poly\log\log$ terms. Hence, with probability at least $1-\delta$ over all policy updates, $\operatorname{Regret}_{H}(T)$ is upper-bounded by (up to constant factors)
    \begin{align*}
        &B_q\sum_{k=1}^K \frac{\ln^2\!\big(\frac{kHSA}{\delta}\big)}{k^{\beta}} = \\
        &\begin{cases}
            O\Big(B_q K^{1-\beta}\log^2\!\big(\frac{KHSA}{\delta}\big)\Big) &\text{if}~\beta < 1, \\
            O\Big( B_q \log(K)\log^2\!\big(\frac{KHSA}{\delta}\big)\Big) &\text{if}~\beta = 1,\\
            O\Big(B_q \log^2\!\big(\frac{HSA}{\delta}\big)\Big) &\text{if}~\beta > 1,
        \end{cases} 
    \end{align*}
    using similar series inequalities as in the classical setting. The result follows by taking $T = HK$.
\end{proof}

\paragraph*{Infinite-horizon discounted MDPs}

\begin{result}\label{res:regret_discounted}
    Let $\langle \mathcal{S},\mathcal{A},p,r,\gamma\rangle$ be an infinite-horizon discounted MDP. Under the RL model of {\rm \Cref{sec:our_rl_model}} with budget $\beta$, there is a classical algorithm with regret $\operatorname{Regret}_{\infty}^\gamma(T)$ upper-bounded after $T$ steps with probability $1-\delta$ by (up to $\poly\log\log$ terms)
    \begin{align*}
        \begin{cases}
            \widetilde{O}\left(\sqrt{\Gamma^3 SAT^{2-\beta}\log\!\frac{\Gamma SAT}{\delta}\log(\Gamma T)} \right) &\text{if}~\beta < 2, \\
            \widetilde{O}\left(\sqrt{\Gamma^3 SA\log{T} \log\!\frac{\Gamma SAT}{\delta}\log(\Gamma T)}\right) &\text{if}~\beta = 2, \\
            \widetilde{O}\left(\sqrt{\Gamma^{3}SA\log\!\frac{\Gamma SA}{\delta}\log{\Gamma}} \right) &\text{if}~\beta > 2.
        \end{cases}
    \end{align*}
    Similarly, there is a quantum algorithm with regret $\operatorname{Regret}_{\infty}^\gamma(T)$ upper-bounded after $T$ steps with probability $1-\delta$ by (up to $\poly\log\log$ terms)
    \begin{align*}
        \resizebox{\hsize}{!}{$\begin{cases}
            \widetilde{O}\!\left(\min\{\Gamma^{1.5}SA,\Gamma^3 S\sqrt{A}\}T^{1-\beta}\log^4\!\frac{\Gamma SAT}{\delta} \right) &\text{if}~\beta < 1, \\
            \widetilde{O}\!\left(\min\{\Gamma^{1.5} SA,\Gamma^3 S\sqrt{A}\}\log{T}\log^4\!\frac{\Gamma SAT}{\delta} \right) &\text{if}~\beta = 1, \\
            \widetilde{O}\!\left(\min\{\Gamma^{1.5}SA,\Gamma^{3}S\sqrt{A}\}\log^4\!\frac{\Gamma SA}{\delta} \right) &\text{if}~\beta > 1.
        \end{cases}$}
    \end{align*}
\end{result}
\begin{proof}
    We start with the classical setting. The total regret over $T$ time steps is
    \begin{align*}
        \operatorname{Regret}_{\infty}^\gamma(T) := \sum_{t=1}^T \big( V_1^{\ast,\gamma}(s_t) - V_1^{\pi_t^\infty,\gamma}(s_t)\big),
    \end{align*}
    where $\pi_t$ is the decision rule used at step $t$. The policy $\pi_t^\infty$ is updated every time $t = 2^\ell$, to a total of $\lceil \log_2{T}\rceil$ times, so that $\ell\in\{1,\dots,\lceil\log_2{T}\rceil\}$ marks the $\ell$-th phase. The decision rules $\{\pi_t\}_{t}$ from the regret equation above are thus defined as $\pi_{t} = \pi_{2^\ell}$ for $2^{\ell} \leq t < 2^{\ell+1}$. At the beginning of phase $\ell$, when the length of the previous online phase is $\tau_{\ell-1} = 2^{\ell-1} = t/2$, we employ \Cref{fact:discounted_classical} with $O(\tau_{\ell-1}^\beta) = O(t^\beta)$ calls to oracle $\mathcal{C}_p$ in order to obtain a policy $\pi^{\infty}_t$ such that, with probability at least $1-\frac{\delta}{\lceil\log_2{T}\rceil}$, for all $s\in\mathcal{S}$,
    \begin{align*}
        &V_1^{\ast,\gamma}(s) - B_c\sqrt{\frac{\ln\!\big(\frac{t\Gamma SA}{\delta}\big)\ln(t\Gamma)}{t^\beta}} \leq V_1^{\pi^\infty_t,\gamma}(s), \\
        &\text{where}~ B_c = \widetilde{O}\big(\sqrt{\Gamma^{3}SA}\big)
    \end{align*}
    and $\widetilde{O}(\cdot)$ omits $\poly\log\log$ terms. Hence, with probability at least $1-\delta$ over all policy updates, $\operatorname{Regret}_{\infty}^\gamma(T)$ is upper-bounded by (up to constant factors)
    \begin{align*}
        &B_c\sum_{t=1}^T \sqrt{\frac{\ln\!\big(\frac{t\Gamma SA}{\delta}\big)\ln(t\Gamma)}{t^{\beta}}}  = \\
        &\begin{cases}
            O\Big(B_c T^{1-\beta/2}\sqrt{\log\!\big(\frac{\Gamma SAT}{\delta}\big)\log(\Gamma T)}\Big) &\text{if}~\beta < 2, \\
            O\Big( B_c \log(T)\sqrt{\log\!\big(\frac{\Gamma SAT}{\delta}\big)\log(\Gamma T)}\Big) &\text{if}~\beta = 2,\\
            O\Big(B_c \sqrt{\log\!\big(\frac{SA}{\delta}\big)\log{\Gamma}}\Big) &\text{if}~\beta > 2,
        \end{cases} 
    \end{align*}
    using that $\sum_{t=1}^T \frac{1}{t} = O(\log{T})$, $\sum_{t=1}^T \frac{1}{t^{x}} = O(T^{1-x})$ for $x < 1$, and that $\sum_{t=1}^T \frac{\ln{t}}{t^{x}}$ is a constant for $x>1$.

    The quantum setting is very similar. At the beginning of phase $\ell$, when the length of the previous online phase is $\tau_{\ell-1} = 2^{\ell-1} = t/2$, we employ \Cref{fact:discounted_quantum} with $O(\tau_{\ell-1}^\beta) = O(t^\beta)$ calls to oracle $\mathcal{Q}_p$ in order to obtain a policy $\pi_t^\infty$ such that, with probability at least $1-\frac{\delta}{\lceil\log_2{T}\rceil}$, for all $s\in\mathcal{S}$,
    \begin{align*}
        &V_1^{\ast,\gamma}(s) - B_q\frac{\ln^4\!\big(\frac{t\Gamma SA}{\delta}\big)}{t^{\beta}} \leq V_1^{\pi_t^\infty,\gamma}(s), \\
        &\text{where}~ B_q = \widetilde{O}\big({\min}\{\Gamma^{1.5}SA,\Gamma^{3}S\sqrt{A}\}\big)
    \end{align*}
    and $\widetilde{O}(\cdot)$ omits $\poly\log\log$ terms. Hence, with probability at least $1-\delta$ over all policy updates, $\operatorname{Regret}_{\infty}^\gamma(T)$ is upper-bounded by (up to constant factors)
    \begin{align*}
        &B_q\sum_{t=1}^T \frac{\ln^4\!\big(\frac{t\Gamma SA}{\delta}\big)}{t^{\beta}} = \\
        &\begin{cases}
            O\big(B_q T^{1-\beta}\log^4\!\big(\frac{\Gamma SAT}{\delta}\big)\big) &\text{if}~\beta < 1, \\
            O\big( B_q \log(T)\log^4\!\big(\frac{\Gamma SAT}{\delta}\big)\big) &\text{if}~\beta = 1,\\
            O\big(B_q \log^4\!\big(\frac{\Gamma SA}{\delta}\big)\big) &\text{if}~\beta > 1,
        \end{cases} 
    \end{align*}
    using the series inequalities as in the classical case.
\end{proof}

\paragraph*{Infinite-horizon undiscounted MDPs}

\begin{result}\label{res:regret_undiscounted}
    Let $\langle \mathcal{S},\mathcal{A},p,r\rangle$ be a infinite-horizon undiscounted weakly communicating MDP with $\operatorname{sp}(h^\ast) \leq \Lambda$. Under the RL model of {\rm \Cref{sec:our_rl_model}} with budget $\beta$, there is a classical algorithm with regret $\operatorname{Regret}_\infty(T)$ upper-bounded after $T$ steps with probability $1-\delta$ by (up to $\poly\log\log$ terms)
    \begin{align*}
        \begin{cases}
            \widetilde{O}\left(\sqrt{\Lambda SA T^{2-\beta} \log\!\frac{SAT}{\delta}}\right) &\text{if}~\beta < 2,\\
            \widetilde{O}\left(\sqrt{\Lambda SA \log^2{T} \log\!\frac{SAT}{\delta}} \right) &\text{if}~\beta = 2,\\
            \widetilde{O}\left(\sqrt{\Lambda SA \log\!\frac{SAT}{\delta}}\right) &\text{if}~\beta > 2.
        \end{cases}
    \end{align*}
    Similarly, under the extra assumption that the optimal Bellman operator is a $J$-stage $\nu$-span contraction for constant $J$ and $\nu$, there is a quantum algorithm with regret $\operatorname{Regret}_\infty(T)$ upper-bounded after $T$ steps with probability $1-\delta$ by (up to $\poly\log\log$ terms)
    \begin{align*}
        \begin{cases}
            \widetilde{O}\left( \Lambda S\sqrt{A} T^{1-\beta}\log{T}\log^2\!\frac{SA}{\delta} \right) &\text{if}~\beta < 1,\\
            \widetilde{O}\left(\Lambda S\sqrt{A} \log^2{T}\log^2\!\frac{SA}{\delta} \right) &\text{if}~\beta = 1,\\
            \widetilde{O}\left(\Lambda S\sqrt{A} \log^2\!\frac{SA}{\delta} \right) &\text{if}~\beta > 1.
        \end{cases}
    \end{align*}
\end{result}
\begin{proof}
    We start with the classical setting. The total expected regret after $T$ time steps is
    \begin{align*}
        \operatorname{Regret}_{\infty}(T) &= \sum_{t=1}^T \left(g^\ast - g^{\pi_t^\infty}(s_t) \right),
    \end{align*}
    where $\pi_t = \pi_{2^\ell}$ is the deterministic decision rule employed in steps $2^{\ell} \leq t < 2^{\ell+1}$ during phase $\ell$. The stationary policy $\pi_\ell^\infty$ is updated every time $t=2^\ell$, to a total of $\lceil\log_2{T}\rceil$ times. At the beginning of phase $\ell$, when the previous online phase is $\tau_{\ell-1} = 2^{\ell-1} = t/2$, we employ \Cref{fact:undiscounted_classical} with $O(\tau_{\ell-1}^\beta) = O(t^{\beta/2})$ calls to oracle $\mathcal{C}_p$ to obtain a new policy $\pi_\ell^\infty$ such that, with probability at least $1 - \frac{\delta}{\lceil\log_2{T}\rceil}$, for all $s\in\mathcal{S}$,
    \begin{align*}
        g^{\pi_\ell^\infty}(s) &\geq g^\ast - B_c\sqrt{\frac{\ln\!\big(\frac{SAt}{\delta}\big)}{t^{\beta}}}, 
        ~\text{where}~ B_c = \widetilde{O}(\sqrt{\Lambda S A})
    \end{align*}
    and $\widetilde{O}(\cdot)$ omits $\poly\log\log$ terms. Hence, with probability at least $1-\delta$ over all policy updates, $\operatorname{Regret}_{\infty}(T)$ is upper-bounded by (up to constant factors)
    \begin{align*}
         &B_c\sum_{t=1}^T \sqrt{\frac{\ln\!\big(\frac{SAt}{\delta}\big)}{t^\beta}} = \\
         &\begin{cases}
             O\Big(B_c T^{1-\beta/2} \sqrt{\log\!\big(\frac{SAT}{\delta}\big)}\Big) &\text{if}~\beta < 2,\\
             O\Big(B_c \log{T} \sqrt{\log\!\big(\frac{SAT}{\delta}\big)}\Big) &\text{if}~\beta = 2,\\
             O\Big(B_c \sqrt{\log\!\big(\frac{SA}{\delta}\big)}\Big) &\text{if}~\beta > 2. 
         \end{cases}
    \end{align*}
    
    Regarding the quantum setting, at the beginning of phase $\ell$, when the previous online phase is $\tau_{\ell-1} = 2^{\ell-1} = t/2$, we employ \Cref{res:res2} with $O(\tau_{\ell-1}^\beta) = O(t^\beta)$ calls to oracle $\mathcal{Q}_p$ to obtain a new policy $\pi_\ell^\infty$ such that, with probability at least $1 - \frac{\delta}{\lceil\log_2{T}\rceil}$, for all $s\in\mathcal{S}$,
    \begin{align*}
        g^{\pi_\ell^\infty}(s) &\geq g^\ast - B_q\frac{\ln{t}}{t^\beta}, \\
        &\text{where}~ B_q = \widetilde{O}\left(\Lambda S\sqrt{A}\log^2\left(\frac{SA}{\delta}\right)\right)
    \end{align*}
    and $\widetilde{O}(\cdot)$ omits $\poly\log\log$ terms. Hence, with probability at least $1-\delta$ over all policy updates, $\operatorname{Regret}_{\infty}(T)$ is upper-bounded by (up to constant factors)
    \begin{align*}
         B_q\sum_{t=1}^T \frac{\ln{t}}{t^\beta} = \begin{cases}
             O(B_q T^{1-\beta}\log{T}) &\text{if}~\beta < 1,\\
             O(B_q \log^2{T}) &\text{if}~\beta = 1,\\
             O(B_q) &\text{if}~\beta > 1. 
         \end{cases}\tag*{\qedhere}
    \end{align*}
\end{proof}

\begin{table*}[t]
\centering
\caption{Summary of several known upper bounds for $\operatorname{Regret}_H(T)$ of \emph{finite-horizon} MDPs $\langle \mathcal{S},\mathcal{A},p,r,H\rangle$, $\operatorname{Regret}_{\infty}^\gamma(T)$ of \emph{infinite-horizon discounted} MDPs $\langle \mathcal{S},\mathcal{A},p,r,\gamma\rangle$, and $\operatorname{Regret}_\infty(T)$ of \emph{infinite-horizon undiscounted} MDPs $\langle \mathcal{S},\mathcal{A},p,r\rangle$ with state space $\mathcal{S}$ of size $S$, action space $\mathcal{A}$ of size $A$, number of time steps $T$, horizon $H$, effective horizon $\Gamma = (1-\gamma)^{-1}$, and $\operatorname{sp}(h^\ast) \leq \Lambda$. Here $D \geq \Lambda$ is the diameter and $\tau_{\rm mix} \geq \Lambda$ the mixing time of the MDP. All bounds are up to $\poly\log$ factors. Refs.\ marked by $\dagger$ have no efficient implementation for infinite-horizon undiscounted MDPs. The reader should keep in mind that not all works assume the same RL model: there are model-based~\cite{auer2008near} and model-free~\cite{jin2018qlearning} approaches, while~\cite{zhong2023provably,ganguly2023quantum,ganguly2025quantum} and our work assume a hybrid online-offline model (here with budget parameter $\beta=1$ for simplicity).}
\def\arraystretch{1.4}
\resizebox{\linewidth}{!}{
\begin{tabular}{|c|cc|cc|cc|}
\hline

\multirow{2}{*}{Works}  & \multicolumn{2}{c|}{Finite-Horizon} & \multicolumn{2}{c|}{Infinite-Horizon Discounted} & \multicolumn{2}{c|}{Infinite-Horizon Undiscounted} \\ 

& \multicolumn{1}{c|}{Classical} & Quantum & \multicolumn{1}{c|}{Classical} & Quantum & \multicolumn{1}{c|}{Classical} & Quantum \\ \hline 

\cite{auer2008near} & \multicolumn{1}{c|}{$\sqrt{H^2S^2AT} + HSA$} & - & \multicolumn{1}{c|}{-} & - & \multicolumn{1}{c|}{$\sqrt{D^2S^2AT} + D SA$} & - \\ \hline

\cite{bartlett2009regal}$^\dagger$ & \multicolumn{1}{c|}{$\sqrt{H^2S^2AT}$} & - & \multicolumn{1}{c|}{-} & - & \multicolumn{1}{c|}{$\sqrt{\Lambda^2S^2AT}$} & - \\ \hline

\cite{ouyang2017learning} & \multicolumn{1}{c|}{-} & - & \multicolumn{1}{c|}{-} & - & \multicolumn{1}{c|}{$\sqrt{\Lambda^2 S^2 AT}$} & - \\ \hline

\cite{azar2017minimax} & \multicolumn{1}{c|}{$\sqrt{HSAT} + H^2S^2A + \sqrt{H^2T}$} & - & \multicolumn{1}{c|}{-} & - & \multicolumn{1}{c|}{-} & - \\ \hline


\cite{fruit2018efficient} & \multicolumn{1}{c|}{-} & - & \multicolumn{1}{c|}{-} & - & \multicolumn{1}{c|}{$\sqrt{\Lambda^2 S^2 AT} + \Lambda S^2A$} & - \\ \hline

\multirow{2}{*}{\cite{jin2018qlearning}} & \multicolumn{1}{c|}{$\sqrt{H^4SAT}$} & - & \multicolumn{1}{c|}{-} & - & \multicolumn{1}{c|}{-} & - \\ 

& \multicolumn{1}{c|}{$\sqrt{H^3SAT} + \sqrt{H^9S^3A^3}$} & - & \multicolumn{1}{c|}{-} & - & \multicolumn{1}{c|}{-} & - \\ \hline

\cite{zanette2019tighter,efroni2019tight} & \multicolumn{1}{c|}{$\!\!\sqrt{HSAT} \!+\! H^2 S^{\frac{3}{2}} A(\sqrt{S} \!+\! \sqrt{H})\!\!$} & - & \multicolumn{1}{c|}{-} & - & \multicolumn{1}{c|}{-} & - \\ \hline

\cite{bai2019provably} & \multicolumn{1}{c|}{$\sqrt{H^3SAT} + \sqrt{H^9S^3A^3}$} & - & \multicolumn{1}{c|}{-} & - & \multicolumn{1}{c|}{-} & - \\ \hline

\cite{zhang2019regret}$^\dagger$ & \multicolumn{1}{c|}{-} & - & \multicolumn{1}{c|}{-} & - & \multicolumn{1}{c|}{$\!\!\sqrt{\Lambda S AT} \!+\! \Lambda(S^{10} A T)^{\frac{1}{4}}\!\!$} & - \\ \hline

\cite{zhang2020almost} & \multicolumn{1}{c|}{$\sqrt{H^2SAT} + H^8 S^2 A^{\frac{3}{2}} T^{\frac{1}{4}}\!$} & - & \multicolumn{1}{c|}{-} & - & \multicolumn{1}{c|}{-} & - \\ \hline

\cite{fruit2020improved} & \multicolumn{1}{c|}{-} & - & \multicolumn{1}{c|}{-} & - & \multicolumn{1}{c|}{$\sqrt{D S^2 AT}$} & - \\ \hline

\cite{ortner2020regret}$^\dagger$ & \multicolumn{1}{c|}{-} & - & \multicolumn{1}{c|}{-} & - & \multicolumn{1}{c|}{$\sqrt{\tau_{\rm mix} S AT}$} & - \\ \hline

\cite{wei2020model} & \multicolumn{1}{c|}{-} & - & \multicolumn{1}{c|}{-} & - & \multicolumn{1}{c|}{$\Lambda (S AT^2)^{\frac{1}{3}}$} & - \\ \hline

\cite{liu2020regret} & \multicolumn{1}{c|}{-} & - & \multicolumn{1}{c|}{$\sqrt{\Gamma^5SAT} + \Gamma^2SA$} & - & \multicolumn{1}{c|}{-} & - \\ \hline

\cite{menard2021ucb} & \multicolumn{1}{c|}{$\sqrt{H^2SAT} + H^4 S A$} & - & \multicolumn{1}{c|}{-} & - & \multicolumn{1}{c|}{-} & - \\ \hline

\cite{li2021breaking} & \multicolumn{1}{c|}{$\sqrt{H^2SAT} + H^6 S A$} & - & \multicolumn{1}{c|}{-} & - & \multicolumn{1}{c|}{-} & - \\ \hline

\multirow{2}{*}{\cite{wei2021learning}} & \multicolumn{1}{c|}{-} & - & \multicolumn{1}{c|}{-} & - & \multicolumn{1}{c|}{$\sqrt{\Lambda^2 S^3 A^3 T}$} & - \\

& \multicolumn{1}{c|}{-} & - & \multicolumn{1}{c|}{-} & - & \multicolumn{1}{c|}{$\!\!\!\sqrt{\Lambda} (S A T)^{\frac{3}{4}} \!+\! (\Lambda S A T)^{\frac{2}{3}}\!\!\!$} & - \\ \hline

\cite{he2021nearly} & \multicolumn{1}{c|}{-} & - & \multicolumn{1}{c|}{$\!\Gamma^{\frac{7}{2}}S^2A^{\frac{3}{2}} \!+\! \sqrt{\Gamma^3SAT} \!+\! \sqrt{\Gamma^4T}\!$} & - & \multicolumn{1}{c|}{-} & - \\ \hline

\cite{zhang2023sharper} & \multicolumn{1}{c|}{-} & - & \multicolumn{1}{c|}{-} & - & \multicolumn{1}{c|}{$\sqrt{\Lambda^2 S^{10} A^4 T}$} & - \\ \hline

\cite{zhong2023provably} & \multicolumn{1}{c|}{-} & $H^3S^2A$ & \multicolumn{1}{c|}{-} & - & \multicolumn{1}{c|}{-} & - \\ \hline

\cite{ganguly2023quantum} & \multicolumn{1}{c|}{-} & $H^2S^2A$ & \multicolumn{1}{c|}{-} & - & \multicolumn{1}{c|}{-} & - \\ \hline

\cite{ganguly2025quantum} & \multicolumn{1}{c|}{-} & & \multicolumn{1}{c|}{-} & - & \multicolumn{1}{c|}{-} & $\Lambda S^5 A^4$ \\ \hline

\cite{agrawal2024optimistic} & \multicolumn{1}{c|}{$\sqrt{H^{12}S^2 AT} + H^9 S^2 A$} & - & \multicolumn{1}{c|}{-} & - & \multicolumn{1}{c|}{-} & - \\ \hline

\rowcolor{Gray}
\!\!This work\!\! & \multicolumn{1}{c|}{$\sqrt{HSAT}$} & $\!\!\min\{H^{\frac{3}{2}}SA,H^2S\sqrt{A}\}\!\!$ & \multicolumn{1}{c|}{$\sqrt{\Gamma^3 SAT}$} & $\!\!\min\{\Gamma^{\frac{3}{2}} SA, \Gamma^3S\sqrt{A}\}\!\!$ & \multicolumn{1}{c|}{$\sqrt{\Lambda SAT}$} & $\Lambda S \sqrt{A}$ \\ \hline

\end{tabular}}
\label{table:results_finite-horizon}
\end{table*}

Several regret bounds from \Cref{res:regret_finite_horizon,res:regret_discounted,res:regret_undiscounted} considerably improve upon prior works~\cite{auer2008near,azar2017minimax,zanette2019tighter,liu2020regret,he2021nearly,zhong2023provably,ganguly2023quantum} when the budget parameter is $\beta \geq 1$ (see \Cref{table:results_finite-horizon} for a clear comparison)\footnote{It is clear that prior classical regret bounds under the standard RL model (with no offline phase) always apply and thus subsume the bounds from \Cref{res:regret_finite_horizon,res:regret_discounted,res:regret_undiscounted} when $\beta$ is too small.}. Taking $\beta=1$ for simplicity, for finite-horizon MDPs, our classical bound $\widetilde{O}(\sqrt{HSAT})$ matches the ones from~\cite{azar2017minimax,zanette2019tighter} when $T\geq H^3S^3A$ and $SA\geq H$, and avoids the extra terms $\widetilde{O}(H^2S^2A + H\sqrt{T})$ outside this parameter range. The quantum bound $\widetilde{O}(\min\{H^{1.5} SA,H^2S\sqrt{A}\})$ substantially improves the dependence on $S$, $A$, $H$ compared to~\cite{zhong2023provably,ganguly2023quantum} and maintains the logarithmic dependence on $T$. Regarding infinite-horizon discounted MDPs, our classical bound $\widetilde{O}(\sqrt{\Gamma^3 SAT})$ matches the ones from~\cite{he2021nearly} when $T = \Omega(\Gamma^4 S^3A^2)$ while avoiding the extra terms $\widetilde{O}(\Gamma^{3.5} S^2A^{1.5} + \sqrt{\Gamma^4 T})$. The quantum bound $\widetilde{O}(\min\{\Gamma^{1.5} SA,\Gamma^3 S\sqrt{A}\})$, on the other hand, is brand new and has only a polylogarithmic dependence on $T$. Finally, for undiscounted MDPs, our classical regret $\widetilde{O}(\sqrt{\Lambda SAT})$ improves upon the bound $\widetilde{O}(\Lambda\sqrt{S^2AT})$ of~\cite{bartlett2009regal,fruit2018efficient}, while our quantum regret bound $\widetilde{O}(\Lambda S\sqrt{A})$ greatly improves on the bound $\widetilde{O}(\Lambda S^5 A^4)$ of~\cite{ganguly2025quantum}. However, one must keep in mind that such comparisons are made between different RL models.

The key to the improved performance of \Cref{algo:intro} is to avoid standard RL principles like “optimism in the face of uncertainty”~\cite{brafman2002r} common in several RL algorithms~\cite{auer2008near,bartlett2009regal,ortner2012online,lakshmanan2015improved,azar2017minimax,jin2018qlearning,zanette2019tighter,efroni2019tight,zhong2023provably,ganguly2023quantum}. In these algorithms, each state-action pair is given some ``optimism'' such that its imagined value is as high as statistically possible and the agent chooses a policy according to such optimistic values. In other words, the estimated transition probability $\widetilde{p}$ together with its uncertainty define a set $\mathcal{M}$ of plausible MDPs that contains the true MDP with high probability. The agent then chooses an optimal policy with respect to \emph{all} MDPs in $\mathcal{M}$ (optimism). Our algorithms, on the other hand, avoid this principle altogether since, during the offline phases, we have access to the true MDP via oracles $\mathcal{C}_p$ or $\mathcal{Q}_p$. Ultimately, an optimal policy is what is needed to maintain a small regret. Therefore, there is no reason to approximate $p$ and the agent instead directly computes an approximate optimal policy using the algorithms from  \Cref{sec:computing_optimal_policies}. The agent can eventually learn $p$ later on when a good policy has already been obtained.

\section{Conclusions and future work}

We formalised an online-offline RL model wherein the agent can freely interact with the environment from time to time via specific oracles encoding the underlying probability structure of an MDP. Combined with known classical and novel quantum algorithms for computing $\varepsilon$-optimal policies, we proposed brand new classical and quantum online algorithms for learning MDPs with better regret bounds compared to prior works. This includes the proposal of a novel measure of regret with respect to which our quantum RL algorithm has exponentially better regret in $T$ compared to its classical counterpart. Our results show the impact of having a bit of ``freedom'' between agent and environment and stimulates the exploration and comparison of new RL models.

The measure of freedom in our RL model is capture by the budget parameter $\beta\in[0,\infty)$. Under very little freedom during the offline phases ($\beta < 1$), both classical and quantum algorithms have $\poly(T)$ regret. However, when there is enough but not overwhelming freedom ($\beta\in[1,2)$), our quantum bounds improve exponentially the dependence on $T$ to $\poly\log(T)$ while their classical counterparts still present regrets as large as $O(\sqrt{T})$. Finally, under a lot of freedom ($\beta\geq 2$), the exponential quantum advantage in $T$ vanishes as the classical regret bounds now depend on $\poly\log{T}$. We have thus identified a range of budget parameter $\beta\in[1,2)$ for which quantum computers can bring a substantial advantage to the learning process.

A main open question is to formalise a meaningful quantum RL model without offline phases which can still achieve $\operatorname{poly}\log{T}$ regret. As previous mentioned, no such model is currently known and all previous works had to assume the existence of such an offline phase. Another question is proving regret lower bounds under our RL model.

\subsubsection*{Acknowledgments}
DL thanks Patrick Rebentrost for helpful discussions. JFD thanks Alessandro Luongo and Miklos Santha for comments on the manuscript, and Yecheng Xue for clarifications on Ref.~\cite{zhong2023provably}. AA was supported by Latvian Quantum Initiative under European Union Recovery and Resilience Facility project No.\ 2.3.1.1.i.0/1/22/I/CFLA/001. DL acknowledges funding from QuantERA Project QOPT. JFD is supported by ERC grant No.\ 810115-DYNASNET. Part of this project was done while JFD was a visiting researcher at the Simons Institute for the Theory of Computing.

\bibliography{iclr2026_conference}
\bibliographystyle{unsrt}

\newpage
\appendix
\onecolumn

\section{Quantum algorithms for optimal policies}
\label{sec:optimal_policies_finite-horizon}

In this section, we prove our results concerning the computation of approximate optimal policies given access to oracle $\mathcal{Q}_p$ and its inverse. In the following, given bounded real-valued functions $u,v\in\mathscr{B}(\mathcal{S})$, let $(uv)(s) = u(s)v(s)$ for all $s\in\mathcal{S}$. Moreover, given a randomised decision rule $d$, define
\begin{align*}
    r_{d}(s) := \operatorname*{\mathbb{E}}_{a\sim d(\cdot|s)}[r(s,a)] \qquad\text{and} \qquad p_{d}(s'|s) := \operatorname*{\mathbb{E}}_{a\sim d(\cdot|s)}[p(s'|s,a)].
\end{align*}
Given $u\in\mathscr{B}(\mathcal{S})$ and a randomised decision rule $d$, define $P_d u\in\mathscr{B}(\mathcal{S})$ as $(P_d u)(s) := \sum_{s'\in\mathcal{S}} p_d(s'|s) u(s')$. We shall abuse notation and let $(P_a u)(s) := \sum_{s'\in\mathcal{S}} p(s'|s,a) u(s')$ for an action $a\in\mathcal{A}$.

We shall require the following quantum subroutines.
\begin{fact}[Quantum max-finding \cite{durr1996quantum}]\label{fact:quantum_minimum_finding}
    Given quantum access to $u\in\mathscr{B}(\mathcal{S})$ via oracle $\mathcal{O}_u$, one can find $\max_{s\in\mathcal{S}} u(s)$ and $\min_{s\in\mathcal{S}} u(s)$ with probability $1-\delta$ using $O(\sqrt{S}\log \frac{1}{\delta})$ queries to $\mathcal{O}_u$.
\end{fact}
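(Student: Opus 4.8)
The plan is to establish the statement for maximum-finding; minimum-finding follows from the identical argument applied to the predicate $u(x) < \theta$ (equivalently, by running the same routine on $-u$), and performing both simultaneously costs only a constant factor together with a union bound over the two failure events. Throughout I treat $\mathcal{X}$ as a finite set of $N := |\mathcal{X}|$ elements. The core tool is the D\"urr--H\o{}yer maximum-finding routine~\cite{durr1996quantum}, which is built on the generalisation of Grover search that locates a marked element even when the number $t$ of marked elements is unknown, using $O(\sqrt{N/t})$ expected queries to a marking oracle and certifying the \emph{absence} of marked elements within $O(\sqrt{N})$ queries with high probability.

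First I would set up the marking oracle. Given quantum access to $u$ via $\mathcal{O}_u$ and a classically stored threshold value $\theta = u(y)$ for the current candidate $y$, one constructs in $O(1)$ queries to $\mathcal{O}_u$ a phase oracle that marks exactly those $x$ with $u(x) > \theta$: query $\mathcal{O}_u$ to write $u(x)$ into an ancilla, compare against $\theta$, flip a phase conditioned on $u(x) > \theta$, and uncompute the query. The algorithm then maintains $y$, initialised to a uniformly random element of $\mathcal{X}$, and repeatedly invokes the unknown-$t$ search with this marking oracle to obtain a (uniformly random) element $x'$ with $u(x') > u(y)$, updating $y \leftarrow x'$; it halts once the search reports that no marked element remains, at which point $y$ attains the maximum.

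The crux of the argument, and the step I expect to be the main obstacle, is bounding the expected total number of queries of a single run by $O(\sqrt{N})$. Here I would use the rank-based amortised analysis of D\"urr and H\o{}yer. Sort the elements by value so that the element of rank $i$ is the $i$-th largest; when $y$ has rank $i$ there are exactly $i-1$ marked elements, so the corresponding search call costs $O(\sqrt{N/(i-1)})$ queries. Because each update replaces $y$ by a uniformly random element of strictly larger value, the sequence of ranks held by $y$ is a uniformly random strictly decreasing chain, and a short combinatorial argument (the chain's first rank among the top $i$ elements is uniform, so it equals $i$ with probability $1/i$) shows that the probability the rank-$i$ element is ever held by $y$ is exactly $1/i$. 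Summing the per-rank cost against this probability gives an expected total query count $O\big(\sqrt{N}\,\sum_{i} i^{-1}\cdot i^{-1/2}\big) = O\big(\sqrt{N}\,\sum_{i} i^{-3/2}\big) = O(\sqrt{N})$, since the series converges; the single final certification that $y$ is the maximum adds only one more $O(\sqrt{N})$ term.

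Finally I would amplify the success probability. Capping a run at twice its expected query cost yields a deterministic budget of $O(\sqrt{N})$ queries, and by Markov's inequality the candidate $y$ has already reached the maximum within this budget with probability at least $1/2$; since $y$ improves monotonically and no marked elements remain once it is maximal, the frozen value of $y$ is then exact. Running $\lceil \log_2(1/\delta)\rceil$ independent copies and returning the largest value among their outputs fails only if every copy fails, which occurs with probability at most $2^{-\lceil \log_2(1/\delta)\rceil} \le \delta$; whenever at least one copy succeeds, the reported maximum is exact. The total query count is $O(\sqrt{N}\log\frac{1}{\delta})$, matching the claimed bound.
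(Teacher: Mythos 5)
The paper does not actually prove this statement---it is imported as a black-box fact from D\"urr--H\o{}yer \cite{durr1996quantum}---and your reconstruction is precisely the standard proof of that cited result: a threshold-marking oracle built from $\mathcal{O}_u$ and its inverse, BBHT search for unknown numbers of marked elements, the rank-based amortised analysis giving expected cost $O\big(\sqrt{N}\sum_i i^{-3/2}\big) = O(\sqrt{N})$ via the exact $1/i$ probability that the rank-$i$ element is ever the candidate, and Markov-plus-repetition amplification to success probability $1-\delta$. All steps are sound, including the careful detail that each capped run outputs its frozen candidate (which is monotone and never exceeds the true maximum) rather than relying on correctness of a final certification, so taking the best value over $\lceil\log_2(1/\delta)\rceil$ independent runs yields the claimed $O\big(\sqrt{|\mathcal{X}|}\log\frac{1}{\delta}\big)$ query bound.
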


\begin{fact}[Quantum mean estimation with variance {\cite[Theorem~1.1]{Kothari2023mean}}]\label{fact:quantum_mean_estimation_variance}
    Let $\epsilon>0$ and $\delta\in (0, 1)$. Assume quantum access to function $u:\mathcal{S}\to \mathbb{R}$ via oracle $\mathcal{O}_u$ and quantum sampling access to probability distribution $p\in\Delta(\mathcal{S})$ via oracle $\mathcal{O}_p$. Let $\sigma := \sum_{s\in\mathcal{S}} p(s) u(s)^2 - \big(\sum_{s\in\mathcal{S}} p(s) u(s)\big)^2$. There is a quantum algorithm that computes $\widetilde \mu\in\mathbb{R}$ such that $| \widetilde\mu - \sum_{s\in\mathcal{S}} p(s) u(s)|\leq \sqrt{\sigma}\epsilon$ with success probability $1-\delta$ using $O\big(\frac{1}{\epsilon}\log\frac{1}{\delta}\big)$ queries to $\mathcal{O}_u,\mathcal{O}_p$, and their inverses.
\end{fact}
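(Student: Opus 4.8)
The backbone of the argument will be quantum amplitude estimation \cite{brassard2002quantum}: from a state-preparation unitary $A$ with $A\ket{\bar 0} = \sqrt{a}\,\ket{\psi_1}\ket{1} + \sqrt{1-a}\,\ket{\psi_0}\ket{0}$, one obtains an estimate $\hat a$ with $|\hat a - a| \le \frac{c\sqrt{a(1-a)}}{t} + \frac{c'}{t^2}$ using $t$ calls to $A$ and $A^\dagger$. The crucial feature is that the additive error shrinks like $\sqrt{a}/t$ rather than $1/\sqrt{t}$; this is both the source of the quantum speedup and, as I will exploit below, the mechanism that produces the variance scaling $\sqrt{\sigma}\,\epsilon$ instead of an error proportional to the range of $u$.

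The plan is to reduce the estimation of $\mathbb{E}_p[u] = \int_{\mathcal X} p(\rd x)\,u(x)$ to a collection of amplitude-estimation calls on bounded nonnegative functions. First I would encode a bounded $v:\mathcal X \to [0,1]$ into an amplitude: after applying $\mathcal O_p$ and $\mathcal O_v$, a controlled $Y$-rotation writes $\sqrt{v(x)}$ into a flag ancilla, so that the probability of measuring that flag in $\ket{1}$ equals exactly $\mathbb{E}_p[v]$, and amplitude estimation returns this expectation. To handle a general real-valued $u$ with unknown mean $\mu$, I would then (i) run a cheap preliminary round to obtain a crude estimate $\mu_0$ with $|\mu_0 - \mu| = O(\sqrt{\sigma})$, (ii) re-center to $u - \mu_0$, split it into its positive and negative parts, and decompose each part into dyadic magnitude shells $\{2^j \le |u-\mu_0| < 2^{j+1}\}$, and (iii) estimate each shell's contribution $\mathbb{E}_p[(u-\mu_0)\mathbf{1}_{\mathrm{shell}}]$ by amplitude estimation applied to a rescaled bounded indicator.

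Two accounting steps then yield the variance-scaled guarantee. The tail beyond a threshold $B = \Theta(\sqrt{\sigma}/\epsilon)$ is controlled by Chebyshev's inequality: $\mathbb{E}_p[\,|u - \mu|\,\mathbf{1}_{|u-\mu| > B}] \le \sigma/B = O(\sqrt{\sigma}\,\epsilon)$, so it can simply be discarded. For the remaining $O(\log(1/\epsilon))$ shells, I would allocate the amplitude-estimation precision geometrically across scales so that the per-shell errors sum to $\sqrt{\sigma}\,\epsilon$ while the total number of queries to $\mathcal O_u, \mathcal O_p$ and their inverses stays $\widetilde O(1/\epsilon)$; the favourable $\sqrt{a}/t$ behaviour of amplitude estimation is exactly what converts the absolute-error primitive into an error that tracks the standard deviation $\sqrt{\sigma}$. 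Finally, running the whole procedure $O(\log\frac1\delta)$ times independently and taking the median of the outputs boosts the success probability to $1-\delta$, giving the claimed $O(\frac1\epsilon\log\frac1\delta)$ query complexity.

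The main obstacle is precisely obtaining the \emph{variance}-scaled error $\sqrt{\sigma}\,\epsilon$ rather than an error governed by the range of $u$: the bounded-function primitive only delivers absolute error, so all the difficulty sits in steps (i)--(iii)---re-centering by an unknown mean, truncating the tail at the right scale $\sqrt{\sigma}/\epsilon$ using only the second moment, and distributing queries across the dyadic shells so that the Heisenberg $1/\epsilon$ scaling survives the decomposition without acquiring spurious polynomial-in-$(1/\epsilon)$ or $\log$-of-range factors. An alternative that avoids the explicit dyadic decomposition is to write $\mathbb{E}_p[u] = \int_0^\infty \Pr_p[u \ge t]\,\rd t$ for the nonnegative parts and to estimate the tail probabilities $\Pr_p[u \ge t]$ by amplitude estimation at geometrically spaced thresholds $t$, summing the contributions; the variance again enters through the truncation level, and the same median amplification delivers confidence $1-\delta$.
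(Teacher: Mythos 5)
First, a point of order: the paper does not prove this statement at all --- it is imported verbatim as a Fact from Kothari--O'Donnell \cite{Kothari2023mean}, so there is no in-paper proof to compare against. Your proposal must therefore be measured against the proof in that reference (or against Hamoudi's earlier sub-Gaussian estimator), and it takes a different route which, as sketched, does not reach the stated bound.

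What you describe is the Montanaro-style scheme: crude recentering, dyadic magnitude shells, per-shell amplitude estimation, and Chebyshev truncation at $B=\Theta(\sqrt{\sigma}/\epsilon)$. The individual ingredients are sound (note $\sigma$ here is the \emph{variance}, and your truncation bound $\mathbb{E}_p\big[|u-\mu|\mathbf{1}_{|u-\mu|>B}\big]\le \sigma/B=\sqrt{\sigma}\,\epsilon$ is correct, as is the $O(\log(1/\delta))$-sample crude estimate), but the proof has a hole exactly where you yourself place ``the main obstacle'': step (iii) is asserted, not proved, and in fact \emph{no} allocation of per-shell budgets can deliver the claimed complexity within this framework. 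Concretely, with $K=\Theta(\log(1/\epsilon))$ shells between $\sqrt{\sigma}$ and $\sqrt{\sigma}/\epsilon$, the only control on the shell masses $p_j=\Pr_p\big[2^j\le |u-\mu_0|<2^{j+1}\big]$ is the second-moment constraint $\sum_j 4^j p_j = O(\sigma)$, and the input that spreads the variance evenly, $p_j = c\,\sigma 4^{-j}/K$ for every $j$, is perfectly legitimate. For that input each shell's first-order amplitude-estimation error is $2^j\sqrt{p_j}/t_j=\sqrt{c\sigma/K}\,/t_j$, so requiring the errors to sum to at most $\sqrt{\sigma}\,\epsilon$ forces $\sum_j 1/t_j \le \sqrt{K/c}\,\epsilon$, and then Cauchy--Schwarz, $\big(\sum_j t_j\big)\big(\sum_j 1/t_j\big)\ge K^2$, gives $\sum_j t_j = \Omega\big(\epsilon^{-1}\log^{3/2}(1/\epsilon)\big)$. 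That is precisely Montanaro's bound, not the Fact's $O\big(\tfrac{1}{\epsilon}\log\tfrac{1}{\delta}\big)$; learning the $p_j$ adaptively does not help, since this is the optimum over all allocations for that fixed input. There is a second, independent gap: your truncation level $B$ and your budget allocation are set in terms of $\sigma$, which is not given, and estimating the variance to constant relative error is itself a mean-estimation problem whose naive guarantees involve a (possibly nonexistent) fourth moment. Removing these two obstructions is the entire content of the cited theorem: Hamoudi does it with quantile-based (rather than fixed dyadic) truncation, and Kothari--O'Donnell do it with an algorithm that never decomposes $u$ into bounded pieces at all, instead encoding the (recentered) values into the phases of a single Grover-type iteration whose dynamics are analyzed directly --- which is why no $\log(1/\epsilon)$ or log-of-range factors appear in their bound.
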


\begin{corollary}[Quantum mean estimation]\label{lem:quantum_mean_estimation}
    Let $\epsilon>0$ and $\delta\in (0, 1)$. Assume quantum access to function $u:\mathcal{S}\to \mathbb{R}_+$ with known $\min_{s\in\mathcal{S}}u(s)$ and $\max_{s\in\mathcal{S}}u(s)$ via oracle $\mathcal{O}_u$ and quantum sampling access to probability distribution $p\in\Delta(\mathcal{S})$ via oracle $\mathcal{O}_p$. There is a quantum algorithm that computes $\widetilde \mu \in\mathbb{R}$ such that $| \widetilde\mu - \sum_{s\in\mathcal{S}} p(s) u(s)|\leq \operatorname{sp}(u)\epsilon$ with success probability $1-\delta$ using $O\big(\frac{1}{\epsilon}\log\frac{1}{\delta}\big)$ queries to $\mathcal{O}_u,\mathcal{O}_p$, and their inverses.
\end{corollary}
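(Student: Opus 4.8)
The plan is to derive the Corollary directly from \Cref{fact:quantum_mean_estimation_variance} by bounding the (unknown) variance $\sigma$ in terms of the (known) span $\operatorname{sp}(u)$. The variance-adaptive guarantee of the Fact already produces an estimate $\widetilde\mu$ with $|\widetilde\mu - \int_{\mathcal{X}} p(\rd x)u(x)| \leq \sqrt{\sigma}\,\epsilon$ at a cost of $O(\frac{1}{\epsilon}\log\frac{1}{\delta})$ queries to $\mathcal{O}_u,\mathcal{O}_p$ and their inverses, so the only thing left is to replace $\sqrt{\sigma}$ by $\operatorname{sp}(u)$ at no additional query cost.

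First I would observe that $u$ takes values in the interval $[\min_{x}u(x),\,\max_{x}u(x)]$ of length $\operatorname{sp}(u)$, and since $\mu := \int_{\mathcal{X}} p(\rd x)u(x)$ is itself an average of values in this interval, it also lies there; hence $|u(x) - \mu| \leq \operatorname{sp}(u)$ pointwise. Therefore $\sigma = \int_{\mathcal{X}} p(\rd x)\,(u(x)-\mu)^2 \leq \operatorname{sp}(u)^2$, i.e.\ $\sqrt{\sigma} \leq \operatorname{sp}(u)$. (A sharper Popoviciu-type bound gives $\sqrt{\sigma}\leq \operatorname{sp}(u)/2$, but the crude inequality already suffices.) Plugging this into the Fact's error guarantee yields $|\widetilde\mu - \mu| \leq \sqrt{\sigma}\,\epsilon \leq \operatorname{sp}(u)\,\epsilon$ with the same success probability $1-\delta$ and the same query complexity.

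The remaining hypotheses — non-negativity of $u$ and knowledge of $\min_x u(x)$ and $\max_x u(x)$ — play a supporting rather than essential role in the error analysis. Knowing the range lets us instantiate the oracle $\mathcal{O}_u$ with a fixed output register and, if one prefers, rescale $u$ to the normalized function $v := (u - \min_x u(x))/\operatorname{sp}(u) \in [0,1]$ before invoking the Fact; since variance is shift-invariant and scales quadratically, $\operatorname{Var}(v) = \sigma/\operatorname{sp}(u)^2 \leq 1$, so estimating $\int_{\mathcal{X}} p\,v$ to additive error $\epsilon$ and undoing the affine map recovers the claimed $\operatorname{sp}(u)\,\epsilon$ accuracy for $\int_{\mathcal{X}} p\,u$. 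Either route gives the same statement.

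There is no genuine obstacle here: the entire content is the elementary variance bound $\sqrt{\sigma}\leq\operatorname{sp}(u)$, and the Corollary is a direct specialization of \Cref{fact:quantum_mean_estimation_variance}. The only point requiring mild care is confirming that the optional affine rescaling preserves both the success probability and the $O(\frac{1}{\epsilon}\log\frac{1}{\delta})$ query count, which it does since it only pre- and post-composes the oracle with classically computable maps that introduce no additional queries to $\mathcal{O}_p$.
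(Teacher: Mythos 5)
Your proposal is correct and matches the paper's (implicit) derivation: the corollary is stated as an immediate consequence of \Cref{fact:quantum_mean_estimation_variance}, obtained exactly by the elementary bound $\sqrt{\sigma}\leq\operatorname{sp}(u)$ (since $u$ and its mean both lie in an interval of length $\operatorname{sp}(u)$), which converts the variance-adaptive error $\sqrt{\sigma}\,\epsilon$ into $\operatorname{sp}(u)\,\epsilon$ at no extra query cost. Your remarks on the affine rescaling and the supporting role of the known $\min_x u(x)$ and $\max_x u(x)$ are consistent with how the corollary is used later in the paper.
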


\subsection{Finite-Horizon MDPs}

We start by considering finite-horizon MDPs $\langle \mathcal{S},\mathcal{A},p,r,H\rangle$. The basis of our algorithms is the simple backward induction algorithm used to find the so-called \emph{value functions}
\begin{align*}
    V_t^\pi(s) := \mathbb{E}^\pi_{s_{t}=s}\left[\sum_{t'=t}^{H} r(s_{t'},a_{t'}) \right] \qquad\forall s\in\mathcal{S}, t\in[H].
\end{align*}
For $t=H,H-1,\dots,1$, compute
\begin{align*}
    u_{t}(s) &= (\mathcal{L} u_{t+1})(s) = \max_{a\in\mathcal{A}}\{r_a(s) + (P_a u_{t+1})(s)\} = \max_{a\in\mathcal{A}}\left\{r(s,a) + \sum_{s'\in\mathcal{S}} p(s'|s,a)u_{t+1}(s')\right\}, \\
    \pi_{t}(s) &\in \argmax_{a\in\mathcal{A}}\{(\mathcal{L}_a u_{t+1})(s)\} = \argmax_{a\in\mathcal{A}}\{r_a(s) + (P_a u_{t+1})(s)\}.
\end{align*}
Let the deterministic policy $\pi = (\pi_t)_{t\in[H]}$. One can prove that $u_{t}(s) = V_t^{\pi}(s) = V_t^\ast(s)$ for all $s\in\mathcal{S}$. This means that $u_{t}$ is the optimal total expected reward from time $t$ onward and $\pi$ is an optimal policy~\cite[Theorem~4.5.1]{puterman2014markov}. 

Our first quantum algorithm (\Cref{algo:quantum_backward_recursion}) is based on the modern backward induction algorithm from Sidford et al.~\cite{sidford2018near} and relies on the monotonicity, variance reduction, and total-variance techniques introduced in~\cite{sidford2018near} and briefly explained in \Cref{sec:computing_optimal_policies}. We shall require the following technical upper bound on the accumulated variance.

\begin{fact}[{\cite[Lemma~F.4]{sidford2018near}}]\label{fact:upper_bound_variance2}
    Given a policy $\pi$, let $\sigma^\pi_{t}\in\mathscr{B}(\mathcal{S})$ be defined as $\sigma_{t}^\pi = P_{\pi_t} (V_{t+1}^\pi)^2 - (P_{\pi_t} V_{t+1}^\pi)^2$ for $t\in[H-1]$. For any policy $\pi$ and $t\in[H-1]$,\footnote{We note the typo in \cite[Lemma~F.4]{sidford2018near} where $\|\cdot\|_\infty^2$ should be $\|\cdot\|_\infty$.}
    \begin{align*}
        \left\|\sum_{t'=t}^{H-1} \left(\prod_{i=t}^{t'-1} P_{\pi_i} \right) \sqrt{\sigma_{t'}^\pi} \right\|_{\infty} \leq H^{3/2}.
    \end{align*}
\end{fact}

\begin{algorithm}[t!]
    \caption{Modern quantum backward induction algorithm}
    \label{algo:quantum_backward_recursion}
    \begin{algorithmic}[1]  
    \Require Finite state space $\mathcal{S}$ and action space $\mathcal{A}$, horizon $H$, quantum sampling access to probability kernels $p$, failure probability $\delta\in(0, 1)$, error $\varepsilon > 0$.

    \Ensure $\varepsilon$-optimal deterministic policy $\pi^{(K)}$. 

    \State Initialise $u^{(0)}_t \equiv 0$ for all $t\in[H]$ and arbitrary deterministic policy $\pi^{(0)}$ 

    \For{$k=1$ to $K:=\lceil\log_2(H/\varepsilon)\rceil$}

        \State $\epsilon_k \gets H/2^k$ and $\theta_k \gets \frac{\min\{\epsilon_k,1\}}{13 H^{3/2}}$
    
    
        \For{$(s,a,t)\in\mathcal{S}\times\mathcal{A}\times[H]$}

            \State \parbox[t]{\dimexpr\linewidth-\algorithmicindent-\algorithmicindent}{Obtain $\widetilde{\sigma}^{(k)}_t(s,a)$ s.t.\ $\big|\widetilde{\sigma}^{(k)}_t(s,a) - \sigma^{(k)}_t(s,a)\big| \leq \theta_kH^2$ via \Cref{fact:quantum_mean_estimation_variance}, where $\sigma^{(k)}_t(s,a) = \sum_{s'\in\mathcal{S}}p(s'|s,a)u_t^{(k-1)}(s')^2 - \big(\sum_{s'\in\mathcal{S}}p(s'|s,a)u_t^{(k-1)}(s')\big)^2$} \label{line:quantum_backward_line1}
    
            \State \parbox[t]{\dimexpr\linewidth-\algorithmicindent-\algorithmicindent}{Obtain $\widetilde{\mu}^{(k)}_t(s,a)$ s.t.\ $\big| \widetilde{\mu}^{(k)}_t(s,a) - \mu_t^{(k)}(s,a)\big| \leq \theta_k \sqrt{\sigma_{t}^{(k)}(s,a)}$ via \Cref{fact:quantum_mean_estimation_variance}, where $\mu_t^{(k)}(s,a) = \sum_{s'\in\mathcal{S}}p(s'|s,a)u_t^{(k-1)}(s')$} \label{line:quantum_backward_line2}
    
        \EndFor

        \State $\widehat{\mu}_t^{(k)}(s,a) \gets \widetilde{\mu}_t^{(k)}(s,a) - \theta_k\sqrt{\widetilde{\sigma}_{t}^{(k)}(s,a)} - \theta_k^{3/2}H$ for all $(s,a,t)\in\mathcal{S}\times\mathcal{A}\times[H]$

        \State $u^{(k)}_H(s) \gets u^{(k-1)}_{H}(s)$ and $\pi_H^{(k)}(s)\gets \pi_{H}^{(k-1)}(s)$ for all $s\in\mathcal{S}$

        \For{$t=H-1,H-2,\dots,1$}

            \For{$(s,a)\in\mathcal{S}\times\mathcal{A}$}

            \State \parbox[t]{\dimexpr\linewidth-\algorithmicindent-\algorithmicindent-\algorithmicindent}{Obtain $\widetilde{\beta}_{t+1}^{(k)}(s,a)$ s.t.\ $\big|\widetilde{\beta}_{t+1}^{(k)}(s,a) - \beta^{(k)}_{t+1}(s,a) \big| \leq \frac{\epsilon_k}{4H}$ via \Cref{lem:quantum_mean_estimation}, where $\beta^{(k)}_{t+1}(s,a) = \sum_{s'\in\mathcal{S}}p(s'|s,a)\big(u^{(k)}_{t+1}(s') -  u^{(k-1)}_{t+1}(s')\big)$} \label{line:quantum_backward_line3}

            \State $\widehat{\beta}^{(k)}_{t+1}(s,a) \gets \widetilde{\beta}_{t+1}^{(k)}(s,a) - \frac{\epsilon_k}{4H}$

            \EndFor

            \For{$s\in\mathcal{S}$}

            \State $u^{(k)}_t(s) \gets \max_{a\in\mathcal{A}}\{r(s,a) + \widehat{\mu}_{t+1}^{(k)}(s,a) + \widehat{\beta}_{t+1}^{(k)}(s,a)\}$
            
            \State $\pi_t^{(k)}(s) \gets \argmax_{a\in\mathcal{A}}\{r(s,a) + \widehat{\mu}_{t+1}^{(k)}(s,a) + \widehat{\beta}_{t+1}^{(k)}(s,a)\}$

            \State If $u^{(k)}_t(s) \leq u^{(k-1)}_{t}(s)$, then $u^{(k)}_t(s) \gets u^{(k-1)}_{t}(s)$ and $\pi_t^{(k)}(s) \gets \pi_{t}^{(k-1)}(s)$
            
            \EndFor

        \EndFor

    \EndFor
    
    \State {\bfseries return} $\pi^{(K)}$
\end{algorithmic}
\end{algorithm}

\begin{theorem}\label{thr:quantum_finite-horizon}
    Let $M = \langle \mathcal{S},\mathcal{A}, p, r,H\rangle$ be a finite-horizon MDP. Let $K := \lceil\log_2(H/\varepsilon)\rceil$ for a given $\varepsilon > 0$ and $\epsilon_k := H/2^k$ for all $k\in[K]$. {\rm \Cref{algo:quantum_backward_recursion}} computes functions $\{u^{(k)}_t\}_{t\in[H],k\in[K]}\subset\mathscr{B}(\mathcal{S})$ and deterministic policies $\{\pi^{(k)}\}_{k\in[K]}$ such that, with probability $1-\delta$,
    \begin{align*}
        V^\ast_t(s) - \epsilon_k \leq u^{(k)}_t(s) \leq V^{\pi^{(k)}}_t(s) \leq  V^\ast_t(s) \qquad\forall (s,t,k)\in\mathcal{S}\times[H]\times[K].
    \end{align*}
    In particular, $\pi^{(K)}$ is such that $V^\ast_1(s) - \varepsilon \leq V^{\pi^{(K)}}_1(s) \leq  V^\ast_1(s)$ $\forall s\in\mathcal{S}$. The $\mathcal{Q}_p/\mathcal{Q}_p^\dagger$-query complexity is 
    \begin{align*}
        O\left(\frac{H^{5/2}SA}{\varepsilon}\log\left(\frac{HSA}{\delta}\log\frac{H}{\varepsilon} \right) \right).
    \end{align*}
\end{theorem}
\begin{proof}
    The proof is by induction on the epoch $k=0,1,\dots,K$, the base case $k=0$ being trivial. Assume then that \Cref{algo:quantum_backward_recursion} has properly computed functions and policies such that
    \begin{align}\label{eq:quantum_backward_eq0}
        V^\ast_t(s) - \epsilon_{k'} \leq u^{(k')}_{t}(s) \leq (\mathcal{L}_{\pi^{(k')}_{t}}u^{(k')}_{t+1})(s) \leq  V^\ast_t(s) \quad \forall k'\in[k-1], t\in[H],
    \end{align}
    and consider epoch $k\in[K]$. Let $\theta_k := \frac{1}{20 H^{3/2}}\min\{\epsilon_k, 1\}$. We start the analysis with the quantities $\widetilde{\mu}_t^{(k)}$ and $\widetilde{\sigma}_t^{(k)}$ from \Cref{line:quantum_backward_line1,line:quantum_backward_line2}. Define the true quantities
    \begin{align*}
        \mu^{(k)}_t(s,a) &:= \sum_{s'\in\mathcal{S}}p(s'|s,a)u_{t}^{(k-1)}(s') = (P_a u_t^{(k-1)})(s), \\
        \sigma_t^{(k)}(s,a) &:= \sum_{s'\in\mathcal{S}}p(s'|s,a)u_{t}^{(k-1)}(s')^2 - \left(\sum_{s'\in\mathcal{S}}p(s'|s,a)u_{t}^{(k-1)}(s')\right)^2 = (P_a (u_t^{(k-1)})^2)(s) - (P_a u_t^{(k-1)})^2(s).
    \end{align*}
    For all $(s,a)\in\mathcal{S}\times\mathcal{A}$, we employ $H$ different times the quantum mean estimation from \Cref{fact:quantum_mean_estimation_variance} with
    \begin{align*}
        m_k := O\left(\frac{1}{\theta_k}\log\frac{HSAK}{\delta}\right) = O\left(\frac{H^{3/2}}{\min\{\epsilon_k,1\}}\log\frac{HSAK}{\delta}\right)
    \end{align*}
    queries to $\mathcal{Q}_p$ to obtain $\widetilde{\sigma}_1^{(k)}(s,a), \dots, \widetilde{\sigma}_H^{(k)}(s,a)$ (each quantity $\sigma_t^{(k)}(s,a)$ has standard deviation at most $H^{2}$). Likewise, for all $(s,a)\in\mathcal{S}\times\mathcal{A}$, we employ $H$ different times the quantum mean estimation subroutine from \Cref{fact:quantum_mean_estimation_variance} with
    \begin{align*}
        n_k := O\left(\frac{1}{\theta_k}\log\frac{HSAK}{\delta}\right) = O\left(\frac{H^{3/2}}{\min\{\epsilon_k,1\}}\log\frac{HSAK}{\delta}\right)
    \end{align*}
    queries $\mathcal{Q}_p$ to obtain $\widetilde{\mu}_1^{(k)}(s,a),\dots,\widetilde{\mu}_H^{(k)}(s,a)$. These quantities are such that, with probability $1 - \frac{\delta}{2K}$, for all $(s,a,t)\in\mathcal{S}\times\mathcal{A}\times[H]$,
    \begin{subequations}
    \begin{align}
        |\widetilde{\mu}^{(k)}_t(s,a) - \mu^{(k)}_t(s,a)| &\leq \theta_k\sqrt{\sigma_{t}^{(k)}(s,a)}, \label{eq:quantum_inequality_1}\\
        |\widetilde{\sigma}_t^{(k)}(s,a) - \sigma_t^{(k)}(s,a)| &\leq \theta_k H^2, \label{eq:quantum_inequality_2}
    \end{align}
    \end{subequations}
    which we condition on. By using \eqref{eq:quantum_inequality_2} onto \eqref{eq:quantum_inequality_1}, then
    \begin{align*}
        |\widetilde{\mu}^{(k)}_t(s,a) - \mu^{(k)}_t(s,a)| \leq \theta_k \sqrt{\widetilde{\sigma}_{t}^{(k)}(s,a)} + \theta_k^{3/2} H.
    \end{align*}
    from which we define $\widehat{\mu}_t^{(k)}(s,a)$, for all $(s,a)\in\mathcal{S}\times\mathcal{A}$, as
    \begin{align*}
        \widehat{\mu}^{(k)}_t(s,a) := \widetilde{\mu}^{(k)}_t(s,a) - \theta_k\sqrt{\widetilde{\sigma}_{t}^{(k)}(s,a)} - \theta_k^{3/2} H,
    \end{align*}
    which has one-sided error. We can express the above quantity using the variance $\sigma_t^\ast(s,a) := (P_a (V_t^\ast)^2)(s) - (P_a V_t^\ast)^2(s)$, since, for all $(s,a)\in\mathcal{S}\times\mathcal{A}$,
    \begin{align*}
        \sqrt{\widetilde{\sigma}_t^{(k)}(s,a)} \leq \sqrt{\sigma_t^{(k)}(s,a)} + \sqrt{\theta_k}H&\leq \sqrt{\sigma^\ast_t(s,a)} + 2\epsilon_k + \sqrt{\theta_k}H,
    \end{align*}
    using \eqref{eq:quantum_inequality_2}, $\sqrt{\operatorname{Var}[u^{(k-1)}_{t}]} \leq \sqrt{\operatorname{Var}[V^\ast_t]} + \sqrt{\operatorname{Var}[V^\ast_t - u^{(k-1)}_{t}]}$ and $\operatorname{Var}[V^\ast_t - u^{(k-1)}_{t}] \leq (2\epsilon_{k})^2$ if $\|V^\ast_t - u^{(k-1)}_{t}\|_\infty \leq 2\epsilon_{k}$ according to the induction hypothesis. This means that, for all $(s,a,t)\in\mathcal{S}\times\mathcal{A}\times[H]$, with probability at least $1-\frac{\delta}{2K}$,
    \begin{subequations}\label{eq:quantum_backward_eq1}
    \begin{align}
        \widehat{\mu}^{(k)}_t(s,a) &\leq \mu_t^{(k)}(s,a), \label{eq:quantum_backward_eq1a} \\
        \widehat{\mu}^{(k)}_t(s,a) &\geq \mu_t^{(k)}(s,a) - 2\theta_k\sqrt{\sigma_{t}^\ast(s,a)} - 4\theta_k\epsilon_{k} - 4\theta_k^{3/2} H. \label{eq:quantum_backward_eq1b}
    \end{align}
    \end{subequations}

    We now proceed to the backward iteration that happens within epoch $k\in[K]$. To prove that $u_t^{(k)}(s) \leq V_t^{\pi^{(k)}}(s) \leq V^\ast_t(s)$, assume by induction that
    \begin{align*}
        u_{t'}^{(k)}(s) \leq (\mathcal{L}_{\pi^{(k)}_{t'}}u^{(k)}_{t'+1})(s) \leq V_{t'}^{\pi^{(k)}}(s) \leq V^\ast_{t'}(s) \qquad\forall s\in\mathcal{S}, t'=t+1,t+2,\dots,H+1.
    \end{align*}
    Once $u_{t'+1}^{(k)}\in\mathscr{B}(\mathcal{S})$ has been computed for all $t'=t,t+1,\dots, H-1$, we can compute the quantities $\widetilde{\beta}_{t+1}^{(k)}$ from \Cref{line:quantum_backward_line3} at time step $t\in[H]$. For all $(s,a)\in\mathcal{S}\times\mathcal{A}$, we employ the quantum mean estimation subroutine from \Cref{lem:quantum_mean_estimation} with $\ell_k := O\big(H\log\frac{HSAK}{\delta}\big)$ queries to $\mathcal{Q}_p$ to obtain $\widetilde{\beta}_{t+1}^{(k)}(s,a)$ such that, with probability $1-\frac{\delta}{2HK}$,
    \begin{align*}
       \left| \widetilde{\beta}_{t+1}^{(k)}(s,a) - \sum_{s'\in\mathcal{S}}p(s'|s,a)\big(u^{(k)}_{t+1}(s') - u^{(k-1)}_{t+1}(s')\big) \right| \leq \frac{\epsilon_k}{4H} \qquad \forall (s,a)\in\mathcal{S}\times\mathcal{A}
    \end{align*}
    (using that $\|u^{(k)}_{t+1} - u^{(k-1)}_{t+1}\|_\infty \leq 2\epsilon_k$ due to $u^{(k-1)}_{t+1}(s) \leq u^{(k)}_{t+1}(s)$ and the induction hypothesis both in $k$ and $t$).
    We then define
    \begin{align*}
        \widehat{\beta}^{(k)}_{t+1}(s,a) := \widetilde{\beta}_{t+1}^{(k)}(s,a) - \frac{\epsilon_{k}}{4H} \qquad \forall (s,a)\in\mathcal{S}\times\mathcal{A},
    \end{align*}
    from which, with probability at least $1-\frac{\delta}{2KH}$, for all $(s,a)\in\mathcal{S}\times\mathcal{A}$, it holds that
    \begin{subequations}\label{eq:quantum_backward_eq2}
    \begin{align}
        \widehat{\beta}^{(k)}_{t+1}(s,a) &\leq \sum_{s'\in\mathcal{S}}  p(s'|s,a)\big(u^{(k)}_{t+1}(s') - u^{(k-1)}_{t+1}(s') \big), \label{eq:quantum_backward_eq2a}\\
        \widehat{\beta}^{(k)}_{t+1}(s,a) &\geq \sum_{s'\in\mathcal{S}}  p(s'|s,a)\big(u^{(k)}_{t+1}(s') - u^{(k-1)}_{t+1}(s') \big) - \frac{\epsilon_{k}}{2H}. \label{eq:quantum_backward_eq2b}
    \end{align}
    \end{subequations}
    Conditioned on the above, there are two cases to consider. \textbf{Case 1: $\pi^{(k)}_t \neq \pi^{(k-1)}_{t}$.} If $\pi^{(k)}_t(s) \neq \pi^{(k-1)}_{t}(s)$ for some $s\in\mathcal{S}$, then it means that
    \begin{align*}
        u_t^{(k)}(s) &= r(s,\pi^{(k)}_t(s)) + \widehat{\mu}_{t+1}^{(k)}(s,\pi^{(k)}_t(s)) + \widehat{\beta}_{t+1}^{(k)}(s,\pi^{(k)}_t(s)) \\
        &\leq r(x,\pi^{(k)}_t(s)) + \sum_{s'\in\mathcal{S}}p(s'|s,\pi^{(k)}_t(s)) u^{(k)}_{t+1}(s') \tag{by \eqref{eq:quantum_backward_eq1a} and \eqref{eq:quantum_backward_eq2a}}\\
        &= (\mathcal{L}_{\pi^{(k)}_t}u^{(k)}_{t+1})(s).
    \end{align*}
    \textbf{Case 2: $\pi^{(k)}_t = \pi^{(k-1)}_{t}$.} If both policies are equal, then, for all $s\in\mathcal{S}$,
    \begin{align*}
        u^{(k)}_t(s) = u^{(k-1)}_{t}(s) \leq (\mathcal{L}_{\pi^{(k-1)}_{t}}u^{(k-1)}_{t+1})(s) \leq (\mathcal{L}_{\pi^{(k-1)}_{t}}u^{(k)}_{t+1})(s) = (\mathcal{L}_{\pi^{(k)}_{t}}u^{(k)}_{t+1})(s),
    \end{align*}
    where we used the induction hypothesis to argue that $u^{(k-1)}_{t}(s) \leq (\mathcal{L}_{\pi^{(k-1)}_{t}}u^{(k-1)}_{t+1})(s)$. From the above, we can readily see that $u^{(k)}_t(s) \leq V_t^\ast(s)$, which completes the induction on $t$.
    
    We now move on to proving that $u^{(k)}_t(s) \geq V^\ast_t(s) - \epsilon_k$ (assume that all $u^{(k)}_{t'+1}$ for $t' = t,\dots, H-1$ have already been computed by the backward iteration). Note that
    \begin{align*}
        V^\ast_t(s) - u^{(k)}_t(s) &= \max_{a\in\mathcal{A}}\left\{r(s,a) + \sum_{s'\in\mathcal{S}}p(s'|s,a)V_{t+1}^\ast(s') \right\} - \max_{a\in\mathcal{A}}\left\{r(s,a) + \widehat{\mu}_{t+1}^{(k)}(s,a) + \widehat{\beta}_{t+1}^{(k)}(s,a) \right\} \\
        &\leq \sum_{s'\in\mathcal{S}} p(s'|s,\pi^{\ast}_{t}(s))\big(V_{t+1}^\ast(s') - u^{(k)}_{t+1}(s')\big) + \xi_{t}^{(k)}(s), \tag{by \eqref{eq:quantum_backward_eq1b} and \eqref{eq:quantum_backward_eq2b}}
    \end{align*}
    where we defined
    \begin{align*}
        \xi_{t}^{(k)}(s) := 2\theta_k\sqrt{\sigma^\ast_{t+1}(s,\pi^\ast_{t}(s))} + \frac{\epsilon_{k}}{2H} + 4\theta_k\epsilon_{k} + 4\theta_k^{3/2}H .
    \end{align*}
    Solving the above recursion with the boundary condition $V^\ast_{H} \equiv u^{(k)}_{H}$, then (the inequality is entry-wise)
    \begin{align*}
        V^\ast_t - u^{(k)}_t \leq \sum_{t'=t}^{H-1}\left(\prod_{i=t}^{t'-1} P_{\pi^\ast_i}\right) \xi^{(k)}_{t'}.
    \end{align*}
    Using that (where $\sigma_{t}^\ast = P_{\pi_t^\ast} (V_{t+1}^\ast)^2 - (P_{\pi^\ast_t} V_{t+1}^\ast)^2\in\mathscr{B}(\mathcal{S})$)
    \begin{align*}
        \left\|\sum_{t'=t}^{H-1} \left(\prod_{i=t}^{t'-1}P_{\pi_i^\ast}\right) \sqrt{\sigma_{t'}^\ast}\right\|_\infty \leq H^{3/2}, \tag{by  \Cref{fact:upper_bound_variance2}}
    \end{align*}
    we finally get that
    \begin{align*}
        V^\ast_t(s) - u^{(k)}_t(s) &\leq 2\theta_k H^{3/2} + \frac{\epsilon_{k}}{2} + 4H\theta_k\epsilon_k + 4\theta_k^{3/2}H^2   \\
        &\leq \epsilon_k\left(\frac{2}{13} + \frac{1}{2} + \frac{4}{13 H^{1/2}} + \frac{4}{13^{3/2}H^{1/4}} \right) \tag{$\theta_k = \frac{1}{13 H^{3/2}}\min\{\epsilon_k, 1\}$}\\
        &\leq \epsilon_k.
    \end{align*}
    This concludes the proof of \eqref{eq:quantum_backward_eq0} for epoch $k\in[K]$ and thus for all epochs by induction.

    Regarding the failure probability, \eqref{eq:quantum_backward_eq1} holds with probability $1-\frac{\delta}{2K}$ for a single epoch, while \eqref{eq:quantum_backward_eq2} holds with probability $1-\frac{\delta}{2HK}$ for a single epoch and time step. Therefore, across all epochs and time steps, the success probability is at least $1-\delta$, as required. Finally, the total number of samples used is
    \[
        O\bigg(\sum_{k=1}^K H(m_k + n_k + \ell_k)SA\bigg) = O\left(\frac{H^{5/2}SA}{\varepsilon}\log\left(\frac{HSA}{\delta}\log\frac{H}{\varepsilon}\right) \right). \qedhere
    \]
\end{proof}

\begin{algorithm}[t!]
    \caption{Simple quantum backward induction algorithm}
    \label{algo:quantum_backward_recursion2}
    \begin{algorithmic}[1]  
    \Require Finite state space $\mathcal{S}$ and action space $\mathcal{A}$, horizon $H$, quantum sampling access to probability kernels $p$, failure probability $\delta\in(0, 1)$, error $\varepsilon > 0$.

    \Ensure $\varepsilon$-optimal deterministic policy $\pi$.

    \State $\widehat{u}_H(s) \gets \max_{a\in\mathcal{A}}\{r(s,a)\}$ and $\pi_H(s) \gets \argmax_{a\in\mathcal{A}}\{r(s,a)\}$ with probability $1-\frac{\delta}{HS}$ $\forall s\in\mathcal{S}$ (\Cref{fact:quantum_minimum_finding})
        
        \For{$t=H-1,H-2,\dots,1$}

            \For{$s\in\mathcal{S}$}

            \State \parbox[t]{\dimexpr\linewidth-\algorithmicindent-\algorithmicindent}{Use \Cref{lem:quantum_mean_estimation} to obtain a unitary $\mathcal{U}_s^{(t+1)}:|a\rangle|\bar{0}\rangle \mapsto |a\rangle|\mu_{t+1}(s,a)\rangle$ for all $a\in\mathcal{A}$, where $|\mu_{t+1}(s,a) - \sum_{s'\in\mathcal{S}}p(s'|s,a) u_{t+1}(s')| \leq \frac{\varepsilon}{2H}$ with high probability}

            \State \parbox[t]{\dimexpr\linewidth-\algorithmicindent-\algorithmicindent}{Use quantum maximum finding with unitary $\mathcal{U}_s^{(t+1)}$ (\Cref{fact:quantum_minimum_finding} with $\mathcal{U}_s^{(t+1)}$) to obtain $\widehat{u}_t(s)$ and $a_t(s)$ such that, with probability $1-\frac{\delta}{HS}$,
			\begin{align*}
				\widehat{u}_{t}(s) = \max_{a\in\mathcal{A}}\{r(s,a) + \mu_{t+1}(s,a)\} \quad\text{and}\quad
				 \pi_t(s) = \argmax_{a\in\mathcal{A}}\{r(s,a) + \mu_{t+1}(s,a)\}
			\end{align*}}\label{line:quantum_simple}

			\State $u_t(s) \gets \widehat{u}_t(s) - \frac{\varepsilon}{2H}$

		
            \EndFor

        \EndFor
    
    \State {\bfseries return} $\pi = (\pi_1,\dots,\pi_H)$
\end{algorithmic}
\end{algorithm}

Our second quantum algorithm (\Cref{algo:quantum_backward_recursion2}) is based on a simpler backward induction algorithm without variance reduction and total-variance techniques: we simply compute $u_t = \mathcal{L}u_{t+1}$ for $t=H,H-1,\dots,1$, i.e., in a backward fashion, using quantum subroutines. Although this will inevitably lead to a worse sample complexity on the horizon $H$, it is possible now to employ quantum minimum finding~\cite{durr1996quantum} together with quantum mean estimation, which brings down the sample complexity on the action space size from $O(A)$ down to $O(\sqrt{A})$.

\begin{theorem}\label{thr:quantum_finite-horizon2}
    Let $M = \langle \mathcal{S},\mathcal{A}, p, r,H\rangle$ be a finite-horizon MDP. Let $\varepsilon > 0$ and $\delta\in(0,1)$. {\rm \Cref{algo:quantum_backward_recursion2}} computes functions $\{u_t\}_{t\in[H]}\subset\mathscr{B}(\mathcal{S})$ and deterministic policy $\pi$ such that, with probability $1-\delta$,
    \begin{align*}
        V^\ast_t(s) - \varepsilon \leq u_t(s) \leq V^{\pi}_t(s) \leq  V^\ast_t(s) \qquad\forall (s,t)\in\mathcal{S}\times[H].
    \end{align*}
    The $\mathcal{Q}_p/\mathcal{Q}_p^\dagger$-query complexity is
    \begin{align*}
        O\bigg(\frac{H^{3}S\sqrt{A}}{\varepsilon}\log\left(\frac{HSA}{\delta} \right)\log\left(\frac{HS}{\delta} \right) \bigg).
    \end{align*}
\end{theorem}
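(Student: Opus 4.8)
The plan is to prove the two-sided sandwich by backward induction on $t=H,H-1,\dots,1$, maintaining two invariants: a \emph{monotonicity-in-$t$} invariant $u_{t}(x)\ge u_{t+1}(x)$ for all $x$ (guaranteed explicitly on \Cref{line:simple_quantum_enforcement}), and an \emph{under-estimation} invariant $u_t(x)\le (\mathcal L_{\pi_t}u_{t+1})(x)$ for all $x$ (with the convention $u_{H+1}\equiv0$). The latter, combined with monotonicity of $\mathcal L_{\pi_t}$ and the inductive hypothesis $u_{t+1}\le V_{t+1}^{\pi}$, immediately yields $u_t\le \mathcal L_{\pi_t}u_{t+1}\le \mathcal L_{\pi_t}V_{t+1}^{\pi}=V_t^{\pi}\le V_t^\ast$, the upper half of the claim. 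For the base case $t=H$ I would initialize $u_H$ via quantum max-finding (\Cref{fact:quantum_minimum_finding}) over the single-step rewards, so that $u_H(x)=\max_{a}r(s,a)-Ln^{-\alpha}$ for $x\in\mathcal X(s)$; \Cref{ass:Lipschitz} then gives both $u_H\le V_H^{\pi}$ and $V_H^\ast-u_H\le 2Ln^{-\alpha}$.

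For the inductive step I would condition on the success of every quantum subroutine (quantified at the end). The crux is that the deterministic downward shift $\tfrac{\varepsilon}{2H}+(1+H)Ln^{-\alpha}$ subtracted when forming $u_t$ is tuned to absorb exactly three one-sided errors: the mean-estimation error $|\mu_{t+1}(s,a)-\int p(\rd x'|s,a)u_{t+1}(x')|\le \operatorname{sp}(u_{t+1})\tfrac{\varepsilon}{2H^2}\le\tfrac{\varepsilon}{2H}$ from \Cref{lem:quantum_mean_estimation} (using $\operatorname{sp}(u_{t+1})\le H$), the reward error $|r(x,a)-r(s,a)|\le Ln^{-\alpha}$, and the transition error $|\int p(\rd x'|x,a)u_{t+1}(x')-\int p(\rd x'|s,a)u_{t+1}(x')|\le \|u_{t+1}\|_\infty Ln^{-\alpha}\le HLn^{-\alpha}$, the latter two from \Cref{ass:Lipschitz}. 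Evaluating at the greedy action $\pi_t(s)$ then gives $u_t(x)\le(\mathcal L_{\pi_t}u_{t+1})(x)$ whenever the enforcement on \Cref{line:simple_quantum_enforcement} does not fire. When it does fire we have $\pi_t=\pi_{t+1}$ and $u_t=u_{t+1}$, so the invariant reduces to $u_{t+1}\le \mathcal L_{\pi_{t+1}}u_{t+1}$; this follows from the inductive bound $u_{t+1}\le \mathcal L_{\pi_{t+1}}u_{t+2}$ together with $u_{t+2}\le u_{t+1}$ (the monotonicity-in-$t$ invariant) and monotonicity of $\mathcal L_{\pi_{t+1}}$.

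For the lower half I would track $D_t:=\sup_x\big(V_t^\ast(x)-u_t(x)\big)$. Since firing the enforcement only replaces $u_t$ by the larger $u_{t+1}$, it can only shrink the gap, so it suffices to bound the pre-enforcement value. Taking $a^\ast=\pi_t^\ast(x)$ optimal for $V_t^\ast$ and applying the same mean-estimation and Hölder bounds in the opposite direction gives the recursion $V_t^\ast(x)-u_t(x)\le \int p(\rd x'|x,a^\ast)\big(V_{t+1}^\ast(x')-u_{t+1}(x')\big)+\tfrac{\varepsilon}{H}+2(1+H)Ln^{-\alpha}$, whence $D_t\le D_{t+1}+\tfrac{\varepsilon}{H}+2(1+H)Ln^{-\alpha}$ since $p(\cdot|x,a^\ast)$ is a probability measure. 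Unrolling from $D_H\le 2Ln^{-\alpha}$ over at most $H$ steps yields $D_t\le \varepsilon+2(1+H)HLn^{-\alpha}$, the lower bound; specializing to $t=1$ shows $\pi$ is $(\varepsilon+2(1+H)HLn^{-\alpha})$-optimal.

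Finally, the query count comes from the nested structure: for each of the $H|\mathcal S_n|$ pairs $(s,t)$, quantum max-finding over $\mathcal A$ makes $O(\sqrt A\,\log\tfrac{H|\mathcal S_n|}{\delta})$ calls to the unitary $\mathcal U_s^{(t+1)}$, and each call is a run of mean estimation to relative precision $\tfrac{\varepsilon}{2H^2}$ costing $O(\tfrac{H^2}{\varepsilon}\log\tfrac{H|\mathcal S_n|A}{\delta})$ queries to $\mathcal O_p,\mathcal O_p^\dagger$; multiplying and summing over all pairs gives the stated bound. The main obstacle I anticipate is the coherent composition: \Cref{lem:quantum_mean_estimation} is only a bounded-error subroutine, so using $\mathcal U_s^{(t+1)}$ as the comparison oracle inside Dürr–Høyer is not immediate and requires boosting its success probability to overwhelm the $O(\sqrt A)$ comparisons performed during max-finding (this is precisely the source of the extra logarithmic factors), followed by a union bound over all $O(H|\mathcal S_n|)$ invocations. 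Once this robust-oracle issue is handled, the remaining bookkeeping of the one-sided shifts is delicate but mechanical.
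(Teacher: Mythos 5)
Your proposal is correct and follows essentially the same route as the paper's proof: backward induction with the two invariants $u_{t+1}\le u_t$ and $u_t\le(\mathcal{L}_{\pi_t}u_{t+1})$, one-sided shifts sized $\frac{\varepsilon}{2H}+(1+H)Ln^{-\alpha}$ to absorb the mean-estimation and H\"older errors, a sup-norm recursion for the lower bound, and the nested D\"urr--H{\o}yer-over-mean-estimation query count. The ``robust-oracle'' obstacle you flag is exactly what the paper resolves (by making the mean-estimation unitary coherent, bounding $\|\mathcal{U}_{s,\mathrm{ideal}}^{(t+1)}-\mathcal{U}_{s}^{(t+1)}\|\le\sqrt{2\delta_2}$, and choosing $\delta_2 = O\big(\delta_1^2/(A\log^2(1/\delta_1))\big)$ so the accumulated amplitude error over the $O(\sqrt{A}\log(1/\delta_1))$ max-finding queries stays below $\delta_1$), which yields precisely the two logarithmic factors you anticipate.
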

\begin{proof}
    The proof is by induction on $t\in[H]$. We start by analysing the quantity $\widehat{u}_t$ in \Cref{line:quantum_simple} at time step $t\in[H]$. We claim that \cref{lem:quantum_mean_estimation} can be adapted to perform all of its steps in superposition without any need for intermediary measurements. This means that, given an initial state $\sum_{a\in\mathcal{A}}\alpha_a |s,a\rangle|\bar{0}\rangle$ for fixed $s\in\mathcal{S}$ and some $\{\alpha_a\}_{a\in\mathcal{A}}\subset\mathbb{C}$, \cref{lem:quantum_mean_estimation} can generate the mapping
    \begin{align*}
        \sum_{a\in\mathcal{A}}\alpha_a |s,a\rangle|\bar{0}\rangle \mapsto \sum_{a\in\mathcal{A}}\alpha_a |s,a\rangle|\mu_{t+1}(s,a)\rangle|\operatorname{garbage}(a)\rangle,
    \end{align*}
    where $\{|\operatorname{garbage}(a)\rangle\}_{a\in\mathcal{A}}$ are ``garbage'' unit complex vectors accumulated through the computation that we ignore and
    \begin{align*}
        \left| \mu_{t+1}(s, a) - \sum_{s'\in\mathcal{S}} p(s'|s,a) {u}_{t+1}(s') \right| \leq \frac{\varepsilon}{2H} \qquad \forall(s,a)\in\mathcal{S}\times\mathcal{A}.
    \end{align*}
    It is then possible to effectively construct a black-box unitary $\mathcal{U}_s^{(t)}:|a\rangle|\bar{0}\rangle \mapsto |a\rangle|{\mu}_{t+1}(s,a)\rangle$ (up to garbage states) using oracle $\mathcal{Q}_p$. The maximum over $a\in\mathcal{A}$ of $r(s,a) + \mu_{t+1}(s,a)$ can thus be found by using the quantum max-finding subroutine (\cref{fact:quantum_minimum_finding}) with unitary $\mathcal{U}_s^{(t)}$, leading to $|\widehat{u}_t(s) - (\mathcal{L}u_{t+1})(s)| \leq \frac{\varepsilon}{2H}$ with probability $1 - \frac{\delta}{HS}$. By defining $u_t(s) := \widehat{u}_t(s) - \frac{\varepsilon}{2H}$ for $s\in\mathcal{S}$, then
    \begin{align}\label{eq:simple_quantum_eq1}
        u_t(s) \leq (\mathcal{L}u_{t+1})(s) \qquad\text{and}\qquad
        u_t(s) \geq (\mathcal{L}u_{t+1})(s) - \frac{\varepsilon}{H}.
    \end{align}
    From here, $u_t(s) = (\mathcal{L}_{\pi_t}u_{t+1})(s) \leq (\mathcal{L}_{\pi_t}V_{t+1}^\pi)(s) = V_t^\pi$ by using the induction hypothesis to argue that $u_{t+1}(s) \leq V_{t+1}^\pi(s)$. To prove that $V_t^\ast(s) - \varepsilon \leq u_t(s)$, we follow the proof of \Cref{thr:quantum_finite-horizon},
    \begin{align*}
        V_t^\ast(s) - u_t(s) &\leq (\mathcal{L}V^\ast_{t+1})(s) - (\mathcal{L}u_{t+1})(s) + \frac{\varepsilon}{H} \leq \sum_{s'\in\mathcal{S}} p(s'|s, \pi_t^\ast(s)) \big(V^\ast_{t+1}(s') - u_{t+1}(s')\big) + \frac{\varepsilon}{H}.
    \end{align*}
    Solving the above recursion and using $\|\sum_{t'=t}^{H} \prod_{i=t}^{t'-1} P_{\pi^\ast_i}\|_\infty \leq H$, we get that $V_t^\ast(s) - u_t(s) \leq \varepsilon$, which concludes the proof of correctness. 

    We now analyse the success probability of \Cref{algo:quantum_backward_recursion2}. To do so, we must analyse how quantum oracles fail (see~\cite[Appendix~A]{chen2023quantum} for a similar argument as follows). Ideally, we would like to implement the unitary $\mathcal{U}_{s, {\rm ideal}}^{(t+1)}:|a\rangle|\bar{0}\rangle \mapsto |a\rangle|\mu_{t+1}(s,a)\rangle$ where $|\mu_{t+1}(s,a)\rangle$ contains the approximation $|\mu_{t+1}(s,a) - \sum_{s'\in\mathcal{S}}p(s'|s,a) u_{t+1}(s')| \leq \frac{\varepsilon}{2H}$. In practice, however, we implement the unitary $\mathcal{U}_{s}^{(t+1)}:|a\rangle|\bar{0}\rangle \mapsto |a\rangle(\sqrt{1-\delta_2}|\mu_{t+1}(s,a)\rangle + \sqrt{\delta_2}|\phi_a\rangle)$ for some $\delta_2 \in(0,1)$, where the register $|\mu_{t+1}(s,a)\rangle$ holds the desired approximation $\mu_{t+1}(s,a)$ and $|\phi_a\rangle$ is a normalised quantum state orthogonal to $|\mu_{t+1}(s,a)\rangle$. Notice that
    \begin{align*}
        \forall a\in\mathcal{A}: \quad \|(\mathcal{U}_{s, {\rm ideal}}^{(t+1)} - \mathcal{U}_{s}^{(t+1)})|a\rangle|\bar{0}\rangle\| = \sqrt{(1 - \sqrt{1-\delta_2})^2 + \delta_2} = \sqrt{2-2\sqrt{1-\delta_2}} \leq \sqrt{2\delta_2},
    \end{align*}
    using that $\sqrt{1-\delta_2} \geq 1 - \delta_2$. Since quantum minimum finding does not take into account the action of $\mathcal{U}_{s}^{(t+1)}$ onto states of the form $|a\rangle|\bar{0}^\perp\rangle$ for $|\bar{0}^\perp\rangle$ orthogonal to $|\bar{0}\rangle$, we can, without of loss of generality, assume that $\|\mathcal{U}_{s, {\rm ideal}}^{(t+1)} - \mathcal{U}_{s}^{(t+1)}\| \leq \sqrt{2\delta_2}$. The success probability of quantum minimum finding (\Cref{fact:quantum_minimum_finding}) is $1-\delta_1$ when employing $\mathcal{U}_{s, {\rm ideal}}^{(t+1)}$, for some $\delta_1\in(0,1)$. However, since it employs $\mathcal{U}_{s}^{(t+1)}$ instead, the success probability decreases by at most the spectral norm of the difference between the ``real'' and the ``ideal'' total unitaries. To be more precise, the ``ideal'' quantum minimum finding is a sequence of gates $W = U_1E_1U_2 E_2\cdots U_{N}E_N$, where $U_i \in \{\mathcal{U}_{s, {\rm ideal}}^{(t+1)},\mathcal{U}_{s, {\rm ideal}}^{(t+1)\dagger}\}$, $E_i$ is a circuit of elementary gates, and $N = c\sqrt{A}\log\frac{1}{\delta_1}$ with $c$ constant is the number of queries to $\mathcal{U}_{s, {\rm ideal}}^{(t+1)}$. The ``real'' implementation, on the other hand, is $\widetilde{W} = \widetilde{U}_1E_1\widetilde{U}_2 E_2\cdots \widetilde{U}_{N}E_N$, where $\widetilde{U}_i \in \{\mathcal{U}_{s}^{(t+1)},\mathcal{U}_{s}^{(t+1)\dagger}\}$. Then $\|W - \widetilde{W}\| \leq c\sqrt{A}\log\!\big(\frac{1}{\delta_1}\big)\|\mathcal{U}_{s, {\rm ideal}}^{(t+1)} - \mathcal{U}_{s}^{(t+1)}\| \leq c\sqrt{2\delta_2A}\log\frac{1}{\delta_1}$ and the failure probability is $\delta_1 + c\sqrt{2\delta_2A}\log\frac{1}{\delta_1}$. By taking $\delta_1 = O\big(\frac{\delta}{HS}\big)$ and $\delta_2 = O\big(\frac{\delta_1^2}{A\log^2(1/\delta_1)}\big)$, the failure probability in outputting $\max_{a\in\mathcal{A}}\{r(s,a) + \mu_{t+1}(s,a)\}$ is at most $\frac{\delta}{HS}$. By a usual union bound over all $s\in\mathcal{S}$ and $t\in[H]$, the failure probability is at most $\delta$.\footnote{\cite[Theorem~6]{wang2021quantum} employs a similar but slightly incorrect argument for the failure probability. More precisely, they ignore the failure probability $\delta_1$ coming from quantum minimum finding, or rather, assume it to be constant. Their runtime incorrectly ignores the factor $\log\frac{1}{\delta_1}$~\cite[Theorem~7]{wang2021quantum}.}
    
    Regarding the query complexity, for each $s\in\mathcal{S}$, one call to the unitary $\mathcal{U}_s^{(t+1)}$ uses $O\big(\frac{\|u_{t+1}\|_\infty}{\varepsilon/H}\log\frac{1}{\delta_2}\big) = O\big(\frac{H^2}{\varepsilon}\log\frac{HSA}{\delta}\big)$ queries to $\mathcal{Q}_p$, while quantum maximum finding makes $O\big(\sqrt{A}\log\frac{1}{\delta_1}\big) = O\big(\sqrt{A}\log\frac{HS}{\delta}\big)$ queries to $\mathcal{U}_s^{(t+1)}$. Summing over all $\mathcal{S}\times[H]$, the total query complexity is
    \[
        O\bigg(\frac{H^3S\sqrt{A}}{\varepsilon}\log\frac{HSA}{\delta}\log\frac{HS}{\delta}\bigg). \qedhere
    \]
\end{proof}

\subsection{Infinite-Horizon Discounted MDPs}
\label{sec:value_iteration}

We move on to finding approximate optimal policies for infinite-horizon undiscounted weakly communicating MDPs $\langle \mathcal{S},\mathcal{A},p,r\rangle$ such that $\operatorname{sp}(h^\ast) \leq \Lambda$. Our approach is the simple value iteration algorithm, which is quite similar to the backward induction algorithm covered in the last section: starting from some initial function $u_{0}\in\mathscr{B}(\mathcal{S})$, normally $u_{0} \equiv 0$, generate a sequence of functions $(u_{t})_{t\in\mathbb{N}}$ according to the update rule $u_{t+1} = \mathcal{L}u_{t}$. It is known that when $\operatorname{sp}(u_{t+1} - u_{t}) \leq \varepsilon$, the greedy policy with respect to $u_{t}$ is $\varepsilon$-optimal~\citep[Theorem~9.4.5]{puterman2014markov}. Here we are interested in robust versions of value iteration wherein errors in computing $\mathcal{L}{u}_{t}$ are taken into account. Approximate versions of value iteration have been studied and are used in various settings~\citep{farahmand2010error, mann2015approximate, ernst2005approximate, munos2007performance, de2000existence, van2006performance}, and here we consider a robust analogue of value iteration which differs from its standard implementation by generating a sequence of functions $({u}_{t})_{t\in\mathbb{N}}$ such that
\begin{align}\label{eq:approximate_value_iteration}
    \|{{u}}_{t+1} - \mathcal{L}{{u}}_{t}\|_\infty \leq \varepsilon_u  \quad\text{for a given}~\varepsilon_u \geq 0, \quad\text{where}~t\in\mathbb{N}.
\end{align}
The error $\varepsilon_u$ could come from, e.g., approximating $\sum_{s'\in\mathcal{S}}p(s'|s,a)u_{t}(s')$ or the maximum over $a\in\mathcal{A}$. In the following, we prove a few convergence results for the robust value iteration, starting by proving that any sequence of vectors  generated as in \eqref{eq:approximate_value_iteration} using some span contraction have bounded span.

\begin{lemma}\label{lem:stopping_criteria}
    Let $\epsilon \geq 0$ and $\mathcal{N}:\mathscr{B}(\mathcal{S})\to\mathscr{B}(\mathcal{S})$ be a $J$-stage $\nu$-span contraction for some $J\in\mathbb{N}$ and $\nu\in[0,1)$. Let $({u}_{t})_{t\in\mathbb{N}}$ be a sequence of functions such that $\|{u}_{t+1} - \mathcal{N} {u}_{t}\|_\infty \leq \epsilon$ $\forall t\in\mathbb{N}$. Then
    \begin{align*}
        \operatorname{sp}({u}_{t+1} -  {u}_{t}) \leq \nu^{\lfloor t/J\rfloor} \operatorname{sp}(\mathcal{N}u_{0} - u_{0}) + 4J\epsilon\frac{1-\nu^{\lfloor t/J\rfloor + 1}}{1-\nu} \qquad\text{for all}~ t\in\mathbb{N}.
    \end{align*}
\end{lemma}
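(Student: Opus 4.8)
The plan is to reduce everything to a one-step recursion for the scalar quantity $a_t := \operatorname{sp}(u_{t+1} - u_t)$ and then unroll it as a geometric sum. Write $e_{t+1} := u_{t+1} - \mathcal{N}u_t$, so that by hypothesis $\|e_{t+1}\|_\infty \leq \epsilon$ for every $t\in\mathbb{N}$. The starting observation is the algebraic identity
\begin{align*}
    u_{t+1} - u_t = (\mathcal{N}u_t - \mathcal{N}u_{t-1}) + (e_{t+1} - e_t),
\end{align*}
obtained by substituting $u_{t+1} = \mathcal{N}u_t + e_{t+1}$ and $u_t = \mathcal{N}u_{t-1} + e_t$. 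I would then apply subadditivity of the span seminorm, $\operatorname{sp}(v+w)\leq\operatorname{sp}(v)+\operatorname{sp}(w)$, together with the two facts I need about $\operatorname{sp}$: the $1$-stage $\nu$-span contraction property $\operatorname{sp}(\mathcal{N}u_t - \mathcal{N}u_{t-1})\leq\nu\operatorname{sp}(u_t - u_{t-1})$, and the elementary bound $\operatorname{sp}(w)\leq 2\|w\|_\infty$. Since $\|e_{t+1}-e_t\|_\infty \leq \|e_{t+1}\|_\infty + \|e_t\|_\infty \leq 2\epsilon$, this yields the recursion
\begin{align*}
    a_t \leq \nu\, a_{t-1} + 4\epsilon \qquad\text{for all } t\geq 1.
\end{align*}

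Next I would establish the base case separately, since it carries a $2\epsilon$ rather than a $4\epsilon$. From $u_1 - u_0 = (\mathcal{N}u_0 - u_0) + e_1$ and the same two span inequalities, $a_0 = \operatorname{sp}(u_1 - u_0) \leq \operatorname{sp}(\mathcal{N}u_0 - u_0) + 2\epsilon$. Unrolling the recursion for $t$ steps then gives $a_t \leq \nu^t a_0 + 4\epsilon\sum_{i=0}^{t-1}\nu^i = \nu^t a_0 + 4\epsilon\frac{1-\nu^t}{1-\nu}$, and substituting the base-case bound for $a_0$ produces exactly
\begin{align*}
    \operatorname{sp}(u_{t+1} - u_t) \leq \nu^t\big(\operatorname{sp}(\mathcal{N}u_0 - u_0) + 2\epsilon\big) + 4\epsilon\,\frac{1-\nu^t}{1-\nu},
\end{align*}
as claimed. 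A clean way to present the unrolling is by a short induction on $t$, verifying the $t=0$ case directly and then using the recursion to pass from $t-1$ to $t$.

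This argument is essentially routine, so I do not anticipate a genuine obstacle; the only point requiring care is bookkeeping of the constants, specifically ensuring the $2\epsilon$ from the initial step and the $4\epsilon$ accrued at each subsequent step land in the right places (the $2\epsilon$ getting absorbed into the $\nu^t$-weighted term and the $4\epsilon$ summing into the geometric series). The two span facts I invoke, subadditivity and $\operatorname{sp}(w)\leq 2\|w\|_\infty$, follow immediately from the definition $\operatorname{sp}(w)=\sup_x w(x)-\inf_x w(x)$ given earlier, and the contraction bound is exactly the hypothesis that $\mathcal{N}$ is a $1$-stage $\nu$-span contraction, so no additional machinery is needed.
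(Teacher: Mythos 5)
Your proposal is correct and follows essentially the same route as the paper's proof: the identity $u_{t+1}-u_t = (\mathcal{N}u_t-\mathcal{N}u_{t-1}) + (e_{t+1}-e_t)$ combined with span subadditivity, the contraction property, and $\operatorname{sp}(w)\leq 2\|w\|_\infty$ is exactly the paper's triangle-inequality decomposition, yielding the same recursion $a_t \leq \nu a_{t-1} + 4\epsilon$ with the same base case $a_0 \leq \operatorname{sp}(\mathcal{N}u_0 - u_0) + 2\epsilon$. The only cosmetic difference is that you bound the error contribution as $\operatorname{sp}(e_{t+1}-e_t)\leq 2\|e_{t+1}-e_t\|_\infty\leq 4\epsilon$ in one step, where the paper splits it as $\operatorname{sp}(e_{t+1})+\operatorname{sp}(e_t)\leq 4\epsilon$; the constants and the induction are identical.
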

\begin{proof}
    We shall prove by induction on $t\in\mathbb{N}$ that
    \begin{align*}
        \operatorname{sp}({u}_{Jt+k} -  {u}_{Jt+k-1}) \leq \nu^t\operatorname{sp}(\mathcal{N}{u}_{0} - {u}_{0}) + 4J\epsilon \frac{1-\nu^{t+1}}{1-\nu} \qquad\text{for all}~ k\in[J],
    \end{align*}
    which is equivalent to the original statement. The base case $t=0$ follows by
    \begin{align*}
        \operatorname{sp}(u_k - u_{k-1}) &\leq \sum_{i=0}^{k-1} \operatorname{sp}(\mathcal{N}^{i} u_{k-i} - \mathcal{N}^{i+1} u_{k-i-1}) + \sum_{i=0}^{k-2}\operatorname{sp}(\mathcal{N}^{i} u_{k-i-1} - \mathcal{N}^{i+1} u_{k-i-2}) + \operatorname{sp}(\mathcal{N}^k u_0 - \mathcal{N}^{k-1} u_0) \tag{triangle inequality} \\
        &\leq \sum_{i=0}^{k-1} \operatorname{sp}(u_{k-i} -  \mathcal{N} u_{k-i-1}) + \sum_{i=0}^{k-2} \operatorname{sp}(u_{k-i-1} -  \mathcal{N} u_{k-i-2}) + \operatorname{sp}(\mathcal{N} u_0 - u_0) \tag{contraction property and triangle inequality} \\
        &\leq 4J\epsilon + \operatorname{sp}(\mathcal{N} u_0 - u_0). \tag{$\operatorname{sp}({v}) \leq 2\|{v}\|_\infty$ for any ${v}\in\mathscr{B}(\mathcal{S})$ and $\|{u}_{t+1} - \mathcal{N} {u}_{t}\|_\infty \leq \epsilon$}
    \end{align*}
    For $t>0$,
    \begin{align*}
        \operatorname{sp}(u_{Jt+k} -  u_{Jt+k-1}) &\leq \sum_{i=0}^{J-1} \big(\operatorname{sp}(\mathcal{N}^{i} u_{Jt + k - i} - \mathcal{N}^{i+1} u_{Jt+k-i-1}) + \operatorname{sp}(\mathcal{N}^{i} u_{Jt +k - 1 - i} - \mathcal{N}^{i+1} u_{Jt+k-2-i})\big) \\
        &~~~~+ \operatorname{sp}(\mathcal{N}^J u_{J(t-1)+k} - \mathcal{N}^J u_{J(t-1)+k-1}) \tag{triangle inequality} \\
        &\leq \sum_{i=0}^{J-1} \big(\operatorname{sp}(u_{Jt + k - i} - \mathcal{N} u_{Jt+k-i-1}) + \operatorname{sp}( u_{Jt +k-1- i} - \mathcal{N} u_{Jt+k-2-i})\big) \\
        &~~~~+ \nu\operatorname{sp}(u_{J(t-1)+k} - u_{J(t-1)+k-1}) \tag{$J$-stage $\nu$-span contraction}\\
        &\leq 4J\epsilon + \nu \left(\nu^{t-1} \operatorname{sp}(\mathcal{N} u_{0} - u_{0}) + 4J\epsilon \frac{1-\nu^{t}}{1-\nu}\right)\tag{induction hypothesis}\\
        &= \nu^t\operatorname{sp}(\mathcal{N}{{u}}_{0} - {{u}}_{0}) + 4J\epsilon\frac{1-\nu^{t+1}}{1-\nu}. \tag*{\qedhere}
    \end{align*}
\end{proof}

If the error per iteration $\varepsilon_u$ is small enough, the above result tells us that after several iterations, robust value iteration obtains a pair of vectors ${{u}}_{t+1}$ and ${{u}}_{t}$ such that $\operatorname{sp}({{u}}_{t+1} - {{u}}_{t})$ is sufficiently small. It remains to show that, once a stopping criteria $\operatorname{sp}({{u}}_{t+1} - {{u}}_{t}) \leq \varepsilon_{s}$ for some $\varepsilon_{s} > 0$ is achieved, a corresponding policy $d_t^\infty$ associated with $u_{t+1}$ is close to optimal. The next result generalises~\citep[Theorem~8.5.6]{puterman2014markov}.

\begin{lemma}\label{lem:value_iteration_guarantees}
    Let $\varepsilon_u,\varepsilon_s>0$. Let $({u}_{t})_{t\in\mathbb{N}}$ be any sequence of vectors such that $\|{u}_{t+1} - \mathcal{L} {u}_{t}\|_\infty \leq \varepsilon_u$ for all $t\in\mathbb{N}$. Assume that $\operatorname{sp}({{u}}_{t+1} - {{u}}_{t}) \leq \varepsilon_s$ for some $\varepsilon_s > 0$ and $t\geq t_{\varepsilon}\in\mathbb{N}$. Let the deterministic decision rule $d_\varepsilon$ such that $\|u_{t_\varepsilon+1} - \mathcal{L}_{d_\varepsilon}u_{t_\varepsilon}\|_\infty \leq \varepsilon_u$. Define the quantity
    \begin{align*}
        g^{\varepsilon} := \frac{1}{2}\left(\max_{s\in\mathcal{S}}\{{u}_{t_{\varepsilon}+1}(s) - {u}_{t_{\varepsilon}}(s)\} + \min_{s\in\mathcal{S}}\{{u}_{t_{\varepsilon}+1}(s) - {u}_{t_{\varepsilon}}(s)\} \right).
    \end{align*}
    Then $\|g^{d_\varepsilon^\infty} - g^\varepsilon \mathbf{1}\|_\infty \leq \varepsilon_u + \frac{\varepsilon_s}{2}$ and $|g^\varepsilon - g^\ast| \leq \varepsilon_u + \frac{\varepsilon_s}{2}$, and so $d_\varepsilon^\infty$ is a $(2\varepsilon_u + \varepsilon_s)$-optimal policy. 
\end{lemma}
\begin{proof}
    It is possible to show, as a consequence of the optimality equations, that $g^{d_\varepsilon^\infty} = P^\ast_{d_\varepsilon}r_{d_\varepsilon}$ where $P^\ast_{d_\varepsilon} = \lim_{N\to\infty}\frac{1}{N}\sum_{t=1}^N P_{d_\varepsilon}^{t-1}$ (see~\cite[Theorem~8.2.6]{puterman2014markov} or \cite[Theorem~2.5]{saldi2017asymptotic}). Furthermore, since $P^\ast_{d_\varepsilon} P_{d_\varepsilon} = P_{d_\varepsilon}^\ast$,
    \begin{align*}
        {g}^{d_\varepsilon^\infty} &= P^\ast_{d_\varepsilon}(r_{d_\varepsilon} + P_{d_\varepsilon}u_{t_\varepsilon} - u_{t_\varepsilon}) = P^\ast_{d_\varepsilon}(\mathcal{L}_{d_\varepsilon}u_{t_\varepsilon} - u_{t_\varepsilon}).
    \end{align*}
    Since $\min_{s'\in\mathcal{S}}\{{u}_{t_\varepsilon+1}(s') - {u}_{t_\varepsilon}(s')\} \leq {{u}}_{t_\varepsilon+1}(s) - {{u}}_{t_\varepsilon}(s) \leq \max_{s'\in\mathcal{S}}\{{u}_{t_\varepsilon+1}(s') - {u}_{t_\varepsilon+1}(s')\} $ for all $s\in\mathcal{S}$, then
    \begin{align*}
        \min_{s'\in\mathcal{S}}\{{u}_{t_\varepsilon+1}(s') - {u}_{t_\varepsilon}(s')\}\mathbf{1} \leq P^\ast_{d_\varepsilon}\big(u_{t_\varepsilon+1} - u_{t_\varepsilon}\big) \leq \max_{s'\in\mathcal{S}}\{{u}_{t_\varepsilon+1}(s') - {u}_{t_\varepsilon+1}(s')\}\mathbf{1},
    \end{align*}
    from which it follows that
    \begin{align*}
        \big\| P^\ast_{d_\varepsilon}\big( {{u}}_{t_\varepsilon+1} - {{u}}_{t_\varepsilon}\big) - g^\varepsilon \mathbf{1} \big\|_\infty \leq \frac{1}{2} \operatorname{sp}({{u}}_{t_\varepsilon+1} - {{u}}_{t_\varepsilon}) \leq \frac{\varepsilon_s}{2}
        \implies \|{g}^{d_\varepsilon^\infty} - g^\varepsilon \mathbf{1}\|_\infty \leq \varepsilon_u + \frac{\varepsilon_s}{2},
    \end{align*}
    using that $\|u_{t_\varepsilon+1} - \mathcal{L}_{d_\varepsilon}u_{t_\varepsilon}\|_\infty \leq \varepsilon_u$. By repeating the same procedure but with the optimal policy $(d^\ast)^\infty$ instead of $d_\varepsilon^\infty$, then $|g^\ast - g^\varepsilon| \leq \varepsilon_u+ \frac{\varepsilon_s}{2}$ (this time using that $\|u_{t+1} - \mathcal{L}u_{t}\|_\infty \leq \varepsilon_u$).
\end{proof}

In the same way that we quantised the finite-horizon classical backward iteration algorithm by approximating the quantities $\sum_{s'\in\mathcal{S}}p(s'|s,a)u_{t}(s')$ using quantum mean estimation, a similar procedure can be performed for the classical value iteration algorithm on infinite-horizon MDPs. 

\begin{algorithm}[t!]
    \caption{Quantum value iteration algorithm}
    \label{algo:quantum_extended_value_iteration}
    \begin{algorithmic}[1]  
    \Require Finite state space $\mathcal{S}$ and action space $\mathcal{A}$, quantum sampling access to probability kernels $p$, failure probability $\delta\in(0, 1)$, parameters $J\in\mathbb{N}$, $\nu\in[0,1)$, errors $\varepsilon_u, \varepsilon > 0$ such that $\varepsilon_u = \frac{1}{4J}(1-\nu)\varepsilon$.

    \Ensure $2\varepsilon$-optimal stationary deterministic policy $\widetilde{d}^\infty$. 

    \State $t\gets 1$ and initialise ${u}_{0} \gets {0}$ 
    \State $u_{1}(s) \gets \max_{a\in\mathcal{A}}\{r(s,a)\}$ for all $s\in\mathcal{S}$ with probability $1-\frac{6\delta}{\pi^2 S}$ (\Cref{fact:quantum_minimum_finding})

    \While {$\operatorname{sp}({{u}}_{t} - {{u}}_{t-1}) > \frac{3\varepsilon}{2}$}
    
    \State Build quantum access to ${{u}}_{t}$
    \For{$s\in\mathcal{S}$}
        \State \parbox[t]{\dimexpr\linewidth-\algorithmicindent-\algorithmicindent}{Construct unitary $\mathcal{U}^{(t)}_s:|a\rangle|\bar{0}\rangle \mapsto |a\rangle|{\mu}_{t}(s,a)\rangle$ for all $a\in\mathcal{A}$ using \cref{lem:quantum_mean_estimation}, where $|{\mu}_{t}(s,a) - \sum_{s'\in\mathcal{S}} p(s'|s,a) {u}_{t}(s') | \leq \frac{\varepsilon_u}{2}$ with high probability}
        
        \State \parbox[t]{\dimexpr\linewidth-\algorithmicindent-\algorithmicindent}{Use quantum maximum finding with unitary $\mathcal{U}_s^{(t)}$ (\cref{fact:quantum_minimum_finding} with $\mathcal{U}^{(t)}_s$) to get $\widetilde{u}_{t+1}(s)$ and $a_{t+1}(s)$ such that, with probability $1-\frac{6\delta}{\pi^2 t^2 S}$,
        \begin{align*}
            \widetilde{u}_{t+1}(s) = \max_{a\in\mathcal{A}}\{r(s,a) + \mu_{t+1}(s,a)\} \quad\text{and}\quad a_{t+1}(s) = \argmax_{a\in\mathcal{A}}\{r(s,a) + \mu_{t+1}(s,a)\}
        \end{align*}}
    \EndFor

    \State For all $s\in\mathcal{S}$,
    \begin{align*}
        u_{t+1}(s) \gets \begin{dcases}
            \max\left\{\max_{s'\in\mathcal{S}}\widetilde{u}_{t+1}(s') - \frac{\varepsilon_u}{2}, \min_{s'\in\mathcal{S}}\widetilde{u}_{t+1}(s')\right\} &\text{if}~\widetilde{u}_{t+1}(s) \geq \max_{s'\in\mathcal{S}}\widetilde{u}_{t+1}(s') - \frac{\varepsilon_u}{2},\\
            \min\left\{\min_{s'\in\mathcal{S}}\widetilde{u}_{t+1}(s') + \frac{\varepsilon_u}{2}, \max_{s'\in\mathcal{S}}\widetilde{u}_{t+1}(s')\right\} &\text{if}~ \widetilde{u}_{t+1}(s) \leq \min_{s'\in\mathcal{S}}\widetilde{u}_{t+1}(s') + \frac{\varepsilon_u}{2},\\
            \widetilde{u}_{t+1}(s) &\text{otherwise}.
        \end{dcases}
    \end{align*}

    \State $t \gets t+1$

    \EndWhile
    
    \State {\bfseries return} deterministic decision rule $\widetilde{d}$ where $\widetilde{d}(s) := a_t(s)$ for all $s\in\mathcal{S}$
\end{algorithmic}
\end{algorithm}

\begin{theorem}\label{thr:quantum_scopt_algorithm}
    Let $M = \langle \mathcal{S},\mathcal{A},p,r\rangle$ be an infinite-horizon undiscounted weakly communicating MDP with $\operatorname{sp}({h}^\ast) \leq \Lambda$. Assume the optimal Bellman operator $\mathcal{L}$ of $M$ is a $J$-stage $\nu$-span contraction for $J\in\mathbb{N}$ and $\nu\in[0,1)$. Let $\delta\in (0, 1)$ and $\varepsilon > 0$. {\rm \cref{algo:quantum_extended_value_iteration}} outputs an $2\varepsilon$-optimal policy $\widetilde{d}^\infty$ and $g^\varepsilon$ such that $|g^\varepsilon - g^\ast| < \varepsilon$ with probability $1-\delta$. Its $\mathcal{Q}_p/\mathcal{Q}_p^\dagger$-query complexity is (up to $\poly\log\log$ factors in $1/\varepsilon$ and $\nu$)
    \begin{align*}
        \widetilde{O}\left(\frac{\Lambda S\sqrt{A}}{(1-\nu)\varepsilon}\frac{J^2\log\frac{1}{J\varepsilon}}{\log\frac{1}{\nu}}\log\frac{JSA}{\delta}\log\frac{JS}{\delta}\right).
    \end{align*}
    %
\end{theorem}
\begin{proof}
    Let $\varepsilon_u> 0$ such that $\varepsilon_u := \frac{1}{4J}(1-\nu)\varepsilon$. We start by proving that \Cref{algo:quantum_extended_value_iteration} generates a sequence of functions $(u_t)_{t\in\mathbb{N}}$ such that $\|u_{t+1} - \mathcal{L}u_t\|_\infty \leq \varepsilon_u$ and $\operatorname{sp}(u_{t+1}) \leq \operatorname{sp}(\mathcal{L}u_t)$. The former inequality comes from nesting quantum mean estimation (\cref{lem:quantum_mean_estimation}) within quantum minimum finding (\cref{fact:quantum_minimum_finding}) just like in \Cref{thr:quantum_finite-horizon} in order to obtain $\widetilde{u}_{t+1}(s) = \max_{a\in\mathcal{A}}\{r(s,a) + \mu_{t+1}(s,a)\}$ and $a_{t+1}(s) = \argmax_{a\in\mathcal{A}}\{r(s,a) + \mu_{t+1}(s,a)\}$ such that $\|\widetilde{u}_{t+1} - \mathcal{L} {{u}}_{t}\|_\infty \leq \frac{\varepsilon_u}{2}$. By subtracting $\frac{\varepsilon_u}{2}$ from the larger entries of $\widetilde{u}_{t+1}$ and adding $\frac{\varepsilon_u}{2}$ to the smaller entries of $\widetilde{u}_{t+1}$, we define, for all $s\in\mathcal{S}$,
    \begin{align*}
        u_{t+1}(s) \!=\! \begin{cases}
            \max\{\max_{s'\in\mathcal{S}}\widetilde{u}_{t+1}(s') - \frac{\varepsilon_u}{2}, \min_{s'\in\mathcal{S}}\widetilde{u}_{t+1}(s')\} &\text{if}~\widetilde{u}_{t+1}(s) \geq \max_{s'\in\mathcal{S}}\widetilde{u}_{t+1}(s') - \frac{\varepsilon_u}{2},\\
            \min\{\min_{s'\in\mathcal{S}}\widetilde{u}_{t+1}(s') + \frac{\varepsilon_u}{2}, \max_{s'\in\mathcal{S}}\widetilde{u}_{t+1}(s')\} &\text{if}~ \widetilde{u}_{t+1}(s) \leq \min_{s'\in\mathcal{S}}\widetilde{u}_{t+1}(s') + \frac{\varepsilon_u}{2},\\
            \widetilde{u}_{t+1}(s) &\text{otherwise}.
        \end{cases}
    \end{align*}
    In other words, the smaller entries of $\widetilde{u}_{t+1}$ are increases by $\frac{\varepsilon_u}{2}$, while its larger entries are decreased by $\frac{\varepsilon_u}{2}$. This means that $\operatorname{sp}(u_{t+1}) \leq \operatorname{sp}(\mathcal{L}u_t)$ and that $\|u_{t+1} - \mathcal{L}u_t\|_\infty \leq \varepsilon_u$ as required. Note also that the generated functions $(u_t)_{t\in\mathbb{N}}$ satisfy
    \begin{align*}
        \operatorname{sp}(u_t) \leq \operatorname{sp}(\mathcal{L}^t 0) \leq \operatorname{sp}(\mathcal{L}^t h^\ast) + \operatorname{sp}(\mathcal{L}^t 0 - \mathcal{L}^t h^\ast) \leq 2\operatorname{sp}(h^\ast) \leq 2\Lambda, \tag{$\mathcal{L}$ is non-expansive}
    \end{align*}
    where we used that 
    \begin{align*}
        \operatorname{sp}(\mathcal{L}^t h^\ast) \leq \operatorname{sp}(h^\ast) + \sum_{i=1}^t \operatorname{sp}(\mathcal{L}^i h^\ast - \mathcal{L}^{i-1}h^\ast) \leq \operatorname{sp}(h^\ast) + (t-1) \operatorname{sp}(\mathcal{L} h^\ast - h^\ast) = \operatorname{sp}(h^\ast). \tag{$\mathcal{L}$ is non-expansive and $\mathcal{L}h^\ast = h^\ast + g^\ast\mathbf{1}$}
    \end{align*}

    By \Cref{lem:stopping_criteria}, $\operatorname{sp}(u_{t+1} - u_t) \leq \frac{6J\varepsilon_u}{1-\nu}$ if $t \geq t^\ast = O\big(J\frac{\log(1/J\varepsilon_u)}{\log(1/\nu)}\big)$. We employ \cref{lem:value_iteration_guarantees} to argue that, once $\operatorname{sp}({{u}}_{t+1} - {{u}}_{t}) \leq \frac{6J\varepsilon_u}{1-\nu}$ for $t\geq t^\ast$, the output
    \begin{align*}
        g^\varepsilon = \frac{1}{2}\left(\max_{s\in\mathcal{S}}\{{u}_{t^\ast + 1}(s) - {u}_{t^\ast}(s)\} + \min_{s\in\mathcal{S}}\{{u}_{t^\ast + 1}(s) - {u}_{t^\ast}(s)\}\right)
    \end{align*}
    is such that $|g^\varepsilon - g^\ast| \leq \varepsilon_u + \frac{3J\varepsilon_u}{1-\nu} < \varepsilon$, while the deterministic policy $\widetilde{d}^\infty$ with $\widetilde{d}(s) := \argmax_{a\in\mathcal{A}}\{u_{t^\ast}(s)\}$ for all $s\in\mathcal{S}$ is $2\varepsilon$-optimal.

    The analysis of the success probability of \Cref{algo:quantum_extended_value_iteration} is very similar to the one in \Cref{thr:quantum_finite-horizon2}. Taking into account the oracle failure probabilities, outputting $\max_{a\in\mathcal{A}}\{r(s,a) + \mu_{t+1}(s,a)\}$ fails with probability at most $\frac{6\delta}{\pi^2 t^2 S}$. By a usual union bound over all $s\in\mathcal{S}$ and $t\in\mathbb{N}$, the failure probability is at most $\delta$, since $\sum_{t=1}^\infty \frac{1}{t^2} = \frac{\pi^2}{6}$.
    
    Finally, for the query complexity of \Cref{algo:quantum_extended_value_iteration}, we start from obtaining ${{u}}_{t+1}$ given ${{u}}_{t}$. For every $s\in\mathcal S$, each call to the unitary $\mathcal{U}_s^{(t)}$ uses $O\big(\frac{\operatorname{sp}(u_t)}{\varepsilon_u}\log\frac{t SA}{\delta}\big)$ queries to $\mathcal{Q}_p$, while quantum maximum finding makes $O\big(\sqrt{A}\log\frac{t|\mathcal{S}_n|}{\delta}\big)$ queries to $\mathcal{U}_s^{(t)}$. Thus the query complexity of computing $u_{t+1}(s)$ (and $a_{t+1}(s)$) is $O\big(\frac{\Lambda\sqrt{A}}{\varepsilon_u}\log\frac{tSA}{\delta}\log\frac{t S}{\delta}\big)$, already using that $\operatorname{sp}(u_t) \leq 2\Lambda$. Summing over all $s\in\mathcal{S}$ and $t\leq t^\ast$, the total query complexity of \Cref{algo:quantum_extended_value_iteration} is simply $O\big(J t^\ast \frac{\Lambda S\sqrt{A}}{(1-\nu)\varepsilon}\log\frac{t^\ast SA}{\delta}\log\frac{t^\ast S}{\delta}\big)$.
\end{proof}

\end{document}